\def\1{\bm{1}}
\def\rmI{{\mathbf{I}}}
\DeclareMathAlphabet{\mathsfit}{\encodingdefault}{\sfdefault}{m}{sl}
\SetMathAlphabet{\mathsfit}{bold}{\encodingdefault}{\sfdefault}{bx}{n}
\def\gN{{\mathcal{N}}}
\newcommand{\KL}{D_{\mathrm{KL}}}
\newcommand{\bbx}{\mathbf{x}}
\definecolor{mylightgray}{gray}{0.94}
\definecolor{cvprblue}{rgb}{0.21,0.49,0.74}
\newmdenv[
  leftline=false,
  rightline=false,
  topline=false,
  bottomline=false,
  backgroundcolor=gray!10,
  innerleftmargin=10pt,
  innerrightmargin=10pt,
  innertopmargin=10pt,
  innerbottommargin=10pt
]{custombox}
\definecolor{Gray}{gray}{0.9}
\newtheorem{theorem}{Theorem}
\newtheorem{definition}{Definition}
\title{Diffusion Time-step Curriculum for One Image to 3D Generation}
\author{
\textbf{Xuanyu Yi}\textsuperscript{1,4}, \textbf{Zike Wu}\textsuperscript{1}, \textbf{Qingshan Xu}\textsuperscript{1}, \textbf{Pan Zhou}\textsuperscript{3*},\\
\textbf{Joo-Hwee Lim}\textsuperscript{4}, \textbf{Hanwang Zhang}\textsuperscript{2,1}\\
{\small \textsuperscript{1}Nanyang Technological University, \textsuperscript{2}Skywork AI} \\
{\small \textsuperscript{3}Singapore Management University, \textsuperscript{4}Institute for Infocomm Research}\\
{\tt\small xuanyu001@e.ntu.edu.sg, zike001@e.ntu.edu.sg, qingshan.xu@ntu.edu.sg,}\\
{\tt\small panzhou@smu.edu.sg, joohwee@i2r.a-star.edu.sg, hanwangzhang@ntu.edu.sg}
}
\begin{document}

\twocolumn[{
\renewcommand\twocolumn[1][]{#1}
\maketitle
\centering
\vspace{-0.5cm}
 \includegraphics[width=\linewidth]{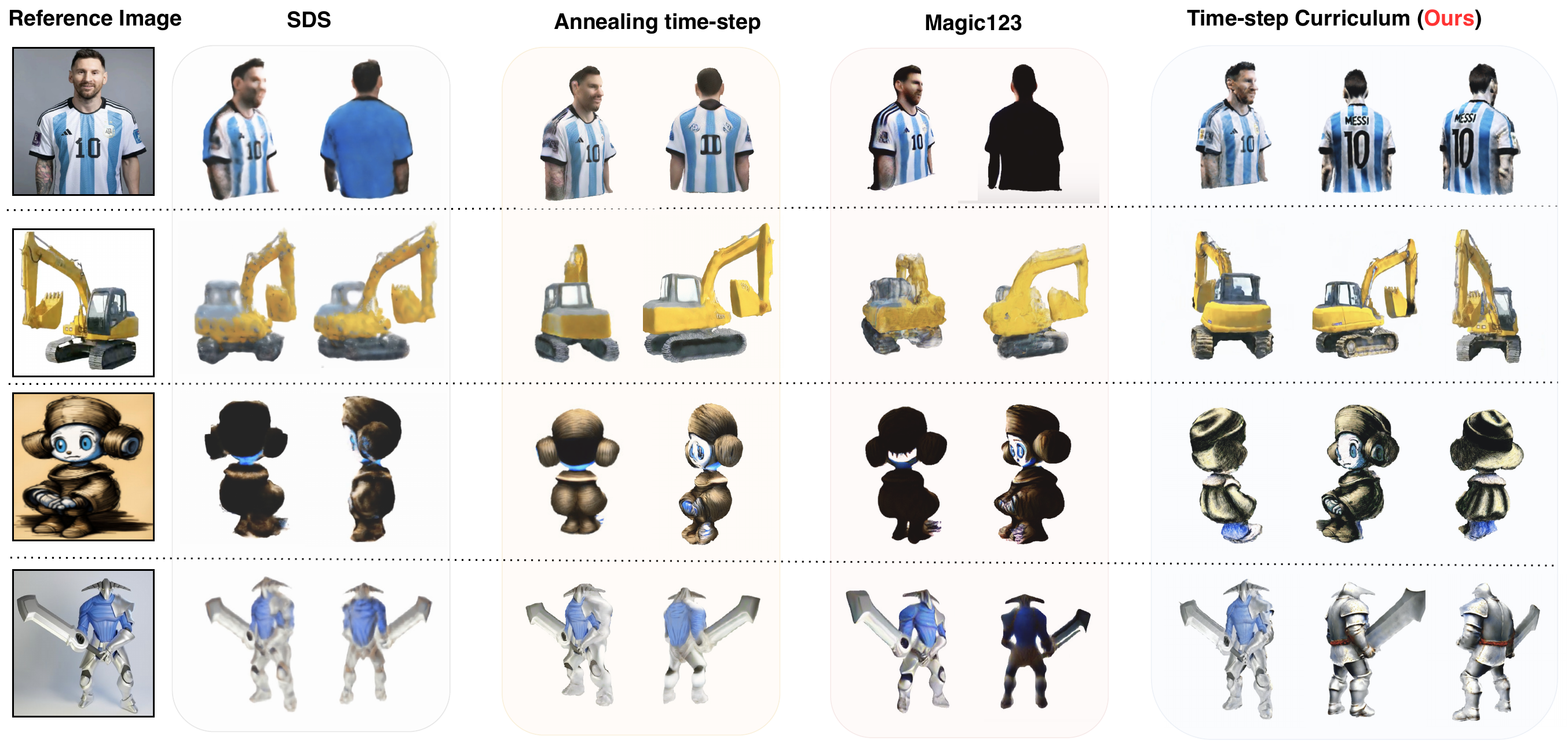}
 
 \captionsetup{type=figure}
\caption{DTC123 can generate high-fidelity and multiview-consistent 3D assets from a single arbitrary image, significantly enhancing the original SDS pipeline through the \textit{diffusion time-step curriculum}.}
\vspace{0.2cm}
\label{fig:1}
}]

\maketitle

{\let\thefootnote\relax\footnotetext{$^*$Corresponding author.}}

\begin{abstract}

Score distillation sampling~(SDS) has been widely adopted to overcome the absence of unseen views in reconstructing 3D objects from a \textbf{single} image. It leverages pre-trained 2D diffusion models as teacher to guide the reconstruction of student 3D models. Despite their remarkable success, SDS-based methods often encounter geometric artifacts and texture saturation. We find out the crux is the overlooked indiscriminate treatment of diffusion time-steps during optimization: it unreasonably treats the student-teacher knowledge distillation to be equal at all time-steps and thus entangles coarse-grained and fine-grained modeling. Therefore, we propose the Diffusion Time-step Curriculum one-image-to-3D pipeline (DTC123), which involves both the teacher and student models collaborating with the time-step curriculum in a coarse-to-fine manner. Extensive experiments on NeRF4, RealFusion15, GSO and Level50 benchmark demonstrate that DTC123 can produce multi-view consistent, high-quality, and diverse 3D assets. Codes and more generation demos will be released in \url{https://github.com/yxymessi/DTC123}.
\end{abstract}

\section{Introduction}\label{sec:intro}

We consider the problem of obtaining a 3D asset from a \textit{single} image. This endeavor holds tremendous industrial promise, notably in realms such as AR/VR content creation from a single snapshot and enhancing robotic navigation through individual captures~\cite{sun2023trosd,gul2019comprehensive}. However, reconstructing 3D models (\eg, NeRF~\cite{mildenhall2021nerf,barron2021mip,muller2022instant} and mesh~\cite{kato2018neural,liu2019soft}) from a single image has remained a formidable challenge due to its severely ill-posed nature, as one image does not contain sufficient unseen views of a 3D scene.

\begin{figure*}[t]
    \centering
    \includegraphics[width=\linewidth]{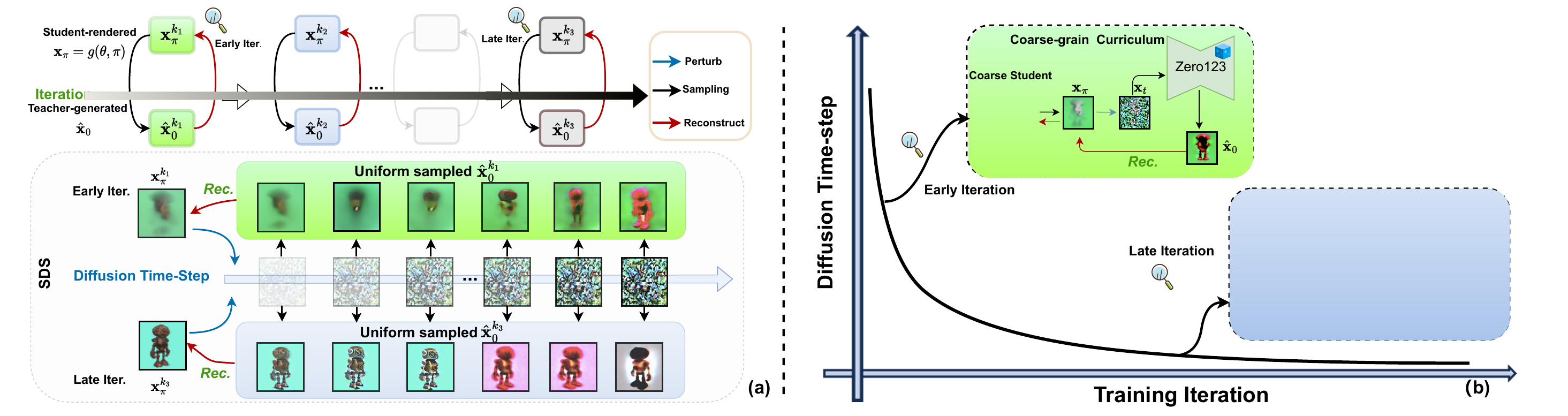}
    \caption{\textbf{(a)} SDS embraces an symbiotic teacher-student cycle with the training iteration progresses (\textit{Top}). However, it entangles coarse-grained and fine-grained modeling with uniform sampling of time steps (\textit{Bottom}) and equal treatment of student and teacher, where $k_1 \ldots k_3 $ denotes the training iteration from early to late. \textbf{(b)} Our DTC123 follows the \textit{diffusion time-step curriculum}, where larger time steps capture coarse-grained concept and smaller time steps focus on fine-grained details.}
    \label{fig:2}
\end{figure*}

Fortunately, recent advances in large-scale pretrained 2D diffusion models~\cite{zhang2023text,rombach2022high,saharia2022photorealistic} have paved the way for such an ill-posed challenge by synthesizing the unseen views of quality. Pioneered by Score Distillation Sampling (SDS)~\cite{poole2022dreamfusion}, we can use the 2D models as \textit{teacher} to guide the reconstruction of the 3D models as \textit{student}.
The key is rooted in a symbiotic teacher-student cycle: As illustrated in Figure~\ref{fig:2} (\textit{Top}), in the early iterations with significantly flawed student output, the teacher provides the rough shape of objects as guidance;
as the student gradually improves, it reciprocates with more precise conditions for the teacher, who in turn provides more accurate and fine-grained supervision.
Along with the teacher's diffusion time steps, SDS optimizes the student by minimizing the reconstruction error between the student-rendered and the teacher-generated 2D images (see details in Sec.~\ref{sds_rec}).

However, 3D models generated by SDS have defects that cannot be ignored. As shown in Figure~\ref{fig:1}, empirical observations indicate that they often encounter collapsed geometry and limited fidelity.
Such issues arise primarily from the confusion of holistic structures and local details as stated in the field of novel view synthesis~\cite{gao2022nerf,riegler2021stable,wiles2020synsin}.
We conjecture that this may be attributed to the uniform sampling of diffusion time steps during the calculation of reconstruction errors, that is, SDS treats the student-teacher knowledge distillation to be \emph{equal at all time steps}. This is counter-intuitive, \eg, the teacher shouldn't teach low-level details when the student is still grappling with high-level concepts.

Our key insight is that an optimal SDS should follow a \textit{diffusion time-step curriculum} : larger time steps capture coarse-grained knowledge like geometry formation and smaller time steps focus on enhancing fine-grained details like texture nuance. To this end, we propose the \textbf{D}iffusion \textbf{T}ime-step \textbf{C}urriculum \textbf{1} image \textbf{to}  \textbf{3}D pipeline dubbed \textbf{DTC123}, where both the teacher and the student model collaborate with the annealing time-step, exhibiting a coarse-to-fine generation process.
More concretely,
\begin{itemize}



\item  \textbf{Student-wise}: 3D models should progress from low-resolution concepts to high-resolution. We leverage resolution constraints from the hash-encoding band (NeRF) and the tet grid (DMTet) to gradually absorb knowledge, beginning with broader structural elements and eventually focusing on localized textures and complex scene illumination (Sec.~\ref{student}).

\item  \textbf{Teacher-wise}: diffusion models should prioritize on coarse shape to visual details. We employ a pose-aware prior, Zero-1-to-3 ~\cite{liu2023zero}, to establish a coarse-grained structure that aligns with the reference image. Subsequently, we harness the combined guidance of Zero-1-to-3  and Stable Diffusion, with LLM-augmented prompt and multi-step sampling (Sec.~\ref{teacher}), to further provide fine-grained texture intricacies.

\end{itemize}

In addition, we introduce several geometric regularization to alleviate the Janus Face~\cite{hong2023debiasing, metzer2023latent} and high-frequency surface artifacts (Sec.~\ref{reg}).
As illustrated in Figure~\ref{fig:1}, by integrating the time-step curriculum, which includes three coherent parts --- time-step schedule, progressive student representation and teacher guidance with the aforementioned regularization techniques, DTC123 significantly enhances the geometry quality and texture fidelity of the SDS-based pipeline. 
We demonstrate the superiority of DTC123 on NeRF4~\cite{mildenhall2021nerf}, RealFusion15~\cite{melas2023realfusion}, GSO~\cite{downs2022google} and our benchmark Level50. Comprehensive quantitative and qualitative evaluations clearly show that DTC123 can efficiently generate multi-view consistent, high-fidelity, and diverse 3D assets, continually outperforming other state-of-the-art methods. Furthermore, DTC123 enables wide range of applications, \eg, multi-instance generation and mesh refinement.
In summary, we make three-fold contributions:


\begin{itemize}
\setlength\itemsep{0em}
\item We develop an end-end one image-3D pipeline DTC123, boosting the efficiency, diversity and fidelity of SDS-based methods in real-world and synthetic scenarios.
\item \textit{Diffusion time-step curriculum} is a plug-and-play training principle that could further unleash the potential of SDS-based teacher-student models.
\item We systematically and theoretically validate the proposed diffusion time-step curriculum.

\end{itemize}

\section{Related Work} \label{sec:related_work}

\noindent \textbf{Text-3D Generation} focuses on generating 3D assets from a given text prompt. The core mechanism of such approaches is the score distillation sampling (SDS) proposed by ~\cite{poole2022dreamfusion}, where the diffusion priors are used as score functions to supervise the optimization of a 3D representation. Recent advancements aim to enhance the training stability and generation fidelity via advanced shape guidance~\cite{shi2023mvdream,zhao2023efficientdreamer,li2023sweetdreamer}, disentangled 3D representation~\cite{wu2023hd,chen2023fantasia3d, xu2023matlaber} and loss design~\cite{wang2023prolificdreamer,zhu2023hifa,katzir2023noise,wu2024consistent3d}. Note that some concurrent works~\cite{zhu2023hifa,huang2023dreamtime,tang2023dreamgaussian} also leverage annealed time-step schedule for efficient training, but they fail to combine such time-step sampling schedule with the teacher-student knowledge transfer, \ie, thus have not fully unleashed the potential of their symbiotic cycle.


\noindent \textbf{Image-3D Generation} focuses on generating 3D assets from a given reference image, which can also be considered as an ill-posed single-view reconstruction problem~\cite{deng2023nerdi}. The above text-to-3D methods can be directly adapted for image-to-3D generation with image captioning, \eg, Realfusion~\cite{melas2023realfusion} and NeRDi~\cite{deng2023nerdi} directly adapted SDS~\cite{poole2022dreamfusion} into Image-to-3D with textual inversion of the given image. Magic123~\cite{qian2023magic123} and Consistent123~\cite{weng2023consistent123} combine Zero-1-to-3~\cite{liu2023zero} with RealFusion~\cite{melas2023realfusion} to further improve the generation quality. One-2-3-45~\cite{liu2023zero} and Dreamgaussian~\cite{tang2023dreamgaussian} adopted NeuS~\cite{wang2021neus,wang2023neus2} and Generative Gaussian Splatting~\cite{kerbl20233d} as the 3D representation respectively, significantly reducing the generation time at the expense of the generation quality. We build upon these optimization-based approaches by implementing a coarse-to-fine optimization strategy that explores the \textit{diffusion time-step curriculum} they overlooked, which enables efficient generation of high-fidelity, multi-view consistent 3D assets from one image.

\section{Preliminary} \label{preliminary} 
Given a single image of an object, our goal is to optimize a coherent 3D model (\eg, NeRF, mesh) so that it can restore the given image from the reference view $\pi_{\text{ref}}$ and generate a highly plausible image from any unseen view $\pi$ with the supervision of the diffusion model.
Here, we mainly introduce Score Distillation Sampling (SDS) ~\cite{poole2022dreamfusion} with \textit{student} 3D models, \textit{teacher} diffusion models (see \textit{Appendix}) that will help to build up our approach.
\label{diffusion}

\subsection{Score Distillation Sampling (SDS)}
\label{sds_rec}
SDS~\cite{poole2022dreamfusion} distills 2D priors from a pre-trained conditional diffusion \textit{teacher} model $\boldsymbol\epsilon_{\phi}(\cdot)$ into differentiable 3D \textit{student} representations $\theta$.
In particular, given a certain camera parameter $\pi$, we \textbf{randomly} select a diffusion time-step $t$ and perturb the student-rendered $\bbx_{\pi} \xrightarrow{+\sigma_{t} \boldsymbol{\epsilon}} \bbx_{t}$ by adding a Gaussian noise $\boldsymbol{\epsilon}$, and reformulate SDS from the perspective of reconstruction by calculating:
\vspace{-2mm}
\begin{align}
\label{sds}
\nabla_{ \boldsymbol\theta} \mathcal{L}_{\text{SDS}}( \boldsymbol \theta, t) = &
\mathbb{E}_{\boldsymbol{\epsilon}}\left[\omega(t)\left(\boldsymbol{\epsilon}_{\phi}\left(\bbx_{t}; t, \boldsymbol{y}\right)-\boldsymbol{\epsilon}\right) \frac{\partial \bbx}{\partial  \boldsymbol \theta}\right] \notag \\
= & \mathbb{E}_{\boldsymbol{\epsilon}}\left[ \bar{{\omega}}(t) \left(\bbx_{\pi}-\hat{\bbx}_{0}\right) \frac{\partial \bbx}{\partial  \boldsymbol \theta} \right],
\end{align}
where $ \hat{\bbx}_{0} = \bbx_{t}-  \sigma_t \boldsymbol{\epsilon}_{\phi}\left(\bbx_{t} ; \boldsymbol{y}, t\right)$, can be considered as single-step de-nosing output with starting point $\bbx_{t}$; $\boldsymbol{y}$ is the condition (\eg, text, camera pose) and depends on the types of teacher diffusion models; $\bar{{\omega}}(t) =\omega(t) / \sigma_t$, denotes the weight function. Thus, we reveal that the crux of this teacher-student optimization process is directly determined by such perturbed-and-denoised output, \ie, the quality of the teacher-generated $\hat\bbx_{0}$. As illustrated in Figure~\ref{fig:2}(a), intuitively, not all time-steps $t$ can provide useful and valid guidance $\hat\bbx_{0}$, which motivates the following proposed diffusion time-step curriculum.


\section{DTC123}
\label{methods}

In this section, we introduce our \textit{diffusion time-step curriculum} one-image-to-3D pipeline, called ``DTC123", with a theoretical justification in \textit{Appendix}. As shown in Figure~\ref{pipeline}(a): We start by taking a reference image and extracting its geometric estimation and text description. Then the optimization procedure can be categorized into unseen view guidance and reference restoration: 


\begin{itemize}  
    \item For the \textbf{unseen view}, we employ our \textit{diffusion time-step curriculum}, where larger time-steps capture coarse-grained concepts, and smaller time-steps focus on fine-grained details. Specifically, we implement such a curriculum with an annealed time-step sampling schedule (Sec.~\ref{timestep}), progressive \textit{student} representation (Sec.~\ref{student}), and coarse-to-fine \textit{teacher} guidance (Sec.~\ref{teacher}).
    \item For the \textbf{reference view}, we basically penalize the 3D model to align with the given image, employing traditional reconstruction constraints.
\end{itemize} 
Moreover, DTC123 incorporates several techniques (Sec.~\ref{reg}) to enhance generation efficiency, geometric robustness, and alleviate the Janus Face problem.

\begin{figure*}[t]
  \begin{minipage}[b]{1.0\linewidth}
  \centerline{\includegraphics[width=0.98\linewidth]{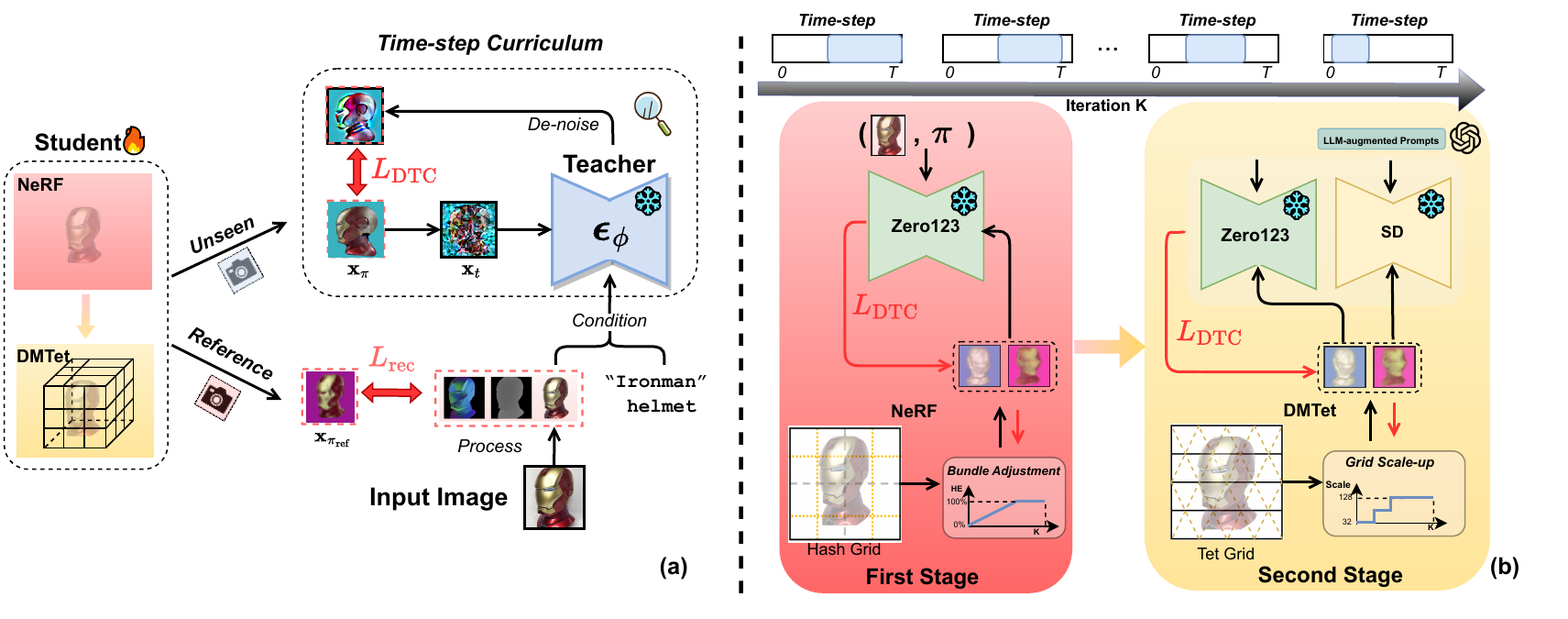}}
  \end{minipage}
  \caption{\textbf{(a)} Overall pipeline of DTC123, which have two optimization stages and includes the reference view reconstruction and unseen view imagination. \textbf{(b)} The zoom-in diagram of unseen view imagination with the proposed \textit{diffusion time-step curriculum}.}
  \label{pipeline}
\end{figure*}

\subsection{Annealed Time-Step Schedule}
\label{timestep}


We develop a time-step annealed sampling schedule for our time-step curriculum, thereby \textit\textbf{facilitating} teacher-student knowledge transfer in a progressive, coarse-to-fine manner. Please refers to \textit{Appendix} for its theoretical justification.
In particular, we use an annealed interval \([t_{\text{mid}} - \Delta, t_{\text{mid}} + \Delta]\) to randomly select the time-step \( t \), with the interval midpoint \( t_{\text{mid}} \) decreasing monotonically.
The interval radius \( \Delta \) narrows progressively during optimization, and the midpoint \( t_{\text{mid}} \) is determined by the current iteration $k$:
\begin{equation}
\label{eq:anneal}
t_{\text{mid}} := t_{\max} - (t_{\max} - t_{\min}) \cdot \log\left(1 + \frac{\lfloor k / l \rfloor \cdot l}{N}\right),
\end{equation}
where \( t_{\max} \) and \( t_{\min} \) refers to the entire diffusion time-step range; \( l \) and \( N \) denotes the step length and total training iteration, respectively. 
This schedule not only introduces \textit{local randomness} to preserve the vibrancy of the model's coloration but also allocates more iterations to smaller time-steps for in-depth detail exploration. 







\subsection{Student: Progressive 3D Representation}
\label{student}

The student 3D model adheres to the \textit{diffusion time-step curriculum} by initially representing coarse-grained features at larger time-steps and subsequently fine-grained details at smaller time-steps. Due to computational memory constraints~\cite{lin2023magic3d,qian2023magic123}, we leverage NeRF~\cite{muller2022instant} for low-resolution scene modeling in the \textit{first} stage, and then adopt DMTet~\cite{shen2021deep} for high-resolution mesh fine-tuning in the \textit{second} stage.
Recall that grid-based 3D models embrace the inherent multi-resolution representation~\cite{yu2022monosdf,sun2022direct,wang2023neus2}, where the lower-level spatial grids (\eg, hash grid, tetrahedral grid) store the general contours while the higher-level counterparts store the finer textures and scene illumination. Accordingly, we employ progressive resolution constraints in \textbf{both} stages, which allows the student to initially assimilate the correct geometry structure information with a smooth coarse-grained signal and later shifts to learn a high-fidelity scene representation. More concretely: 

\noindent \textbf{Progressive bundle for NeRF.}
The design of hash encoding captures both low and high-resolution features, which might inadequately represent basic geometry at large time-step as inaccurate and diverse finer details could overshadow essential structural information. To mitigate this issue, given the hash-grid encoding (HE), we gradually regularize the visible resolution bands by applying a dynamic soft feature mask: 


\begin{equation}
\begin{split}
    m_i(k, N, L) = \left \{
\begin{array}{ll}
    1, & \text{if}~i \leq 4 + \min\left(\left\lfloor \frac{10k}{N} \right\rfloor, L-4\right) \\
    0, & \text{otherwise}
\end{array}
\right.
\end{split}
\end{equation}
where $i$ denotes the HE feature level from coarsest to finest and $L$ denotes the total number of feature levels. 






\noindent \textbf{Progressive Tetrahedral Grid for DMTet.}
As shown in Figure~\ref{pipeline}(b), we initially convert~\cite{stable-dreamfusion} the neural density field to a signed distance function (SDF) field and re-use the  neural color field for texture representation, progressing to higher resolutions \eg, $64 \times 64 \rightarrow 512 \times 512$ for detailed surface rendering. To circumvent issues like mesh distortion or topology errors from abrupt resolution changes, we employ a graded approach, scaling up the tetrahedral grid size, \ie $32 \rightarrow 64 \rightarrow 128$, and the rasterizing resolution progressively with the reducing time-step. This strategy ensures a seamless transition from initialized structural capture to rendering sophisticated surface nuances.

\subsection{Teacher: Coarse-to-fine Diffusion Prior }
\label{teacher}

The teacher diffusion model should follow the \textit{diffusion time-step curriculum} by initially offering a rough silhouette of the object in the desired pose in larger time steps, subsequently prioritizing intricate texture refinement in small time steps. A natural question follows : \textit{What is the suitable teacher for the time-step curriculum?} Empirical evidence\footnote{To answer this question, we quantitatively analyze two most popular teacher diffusion models by investigating the contour exploration consistency via  MaskIoU and the perceptual generation quality computed by CLIP-R~\cite{liu2023one} in the OmniObject3D~\cite{wu2023omniobject3d} dataset. See detailed experimental results in \textit{Appendix}.} suggests: the view-conditioned Zero-1-to-3 serves as a coarse-grained teacher by providing a more accurate geometry structure at large time-step. Conversely, the Stable Diffusion is suitable for a fine-grained teacher as it yields realistic texture details at smaller time-step.

Such evaluation drives our search for the following utilization of teacher guidance that makes the best use of both diffusion priors~\cite{yi2023invariant}.
As illustrated in Figure~\ref{pipeline}(b), in the first stage, we use Zero-1-to-3 solely for efficient coarse-grained contour and structural guidance. Subsequently, in the second stage, a collaborative diffusion guidance with dynamic prior reweighting is leveraged, wherein Zero-1-to-3 guides SDF field geometry optimization supervised by textureless shading; Stable Diffusion aids in the optimization of color field respectively at smaller time-steps. For clarity, we denote the Zero-1-to-3 guidance as $\mathcal{L}_{\text{SDS}}^{\text{geo}}( \boldsymbol \theta, t)$ and the Stable Diffusion guidance as $\mathcal{L}_{\text{SDS}}^{\text{tex}}( \boldsymbol \theta, t)$~\footnote{Note that the weight function $\bar{\omega}(t)$ is calibrated relative to the spherical distance from the reference view, as 3D models tend to reconstruct more accurately near the reference view, requiring less teacher-guided imagination.}. The diffusion objective for the time-step curriculum can then be formulated as:
\begin{equation}
\nabla_{\theta} \mathcal{L}_{\text{DTC}}(\theta,t) =  \nabla_{\theta} \mathcal{L}_{\text{SDS}}^{\text{geo}}( \boldsymbol \theta, t) + \nabla_{\theta} \lambda \mathcal{L}_{\text{SDS}}^{\text{tex}}( \boldsymbol \theta, t),
\end{equation}
where the time-step $t$ follows the sampling schedule in Sec.~\ref{timestep}; $\lambda$ is the trade-off hyper-parameter, set as 0 in the first stage and gradually increase with the reducing time-step in the second stage.
We apply DDIM sampling~\cite{song2020denoising,gabbur2023improved} for multi-step de-nosing process, \ie, $\bbx_{t} \rightarrow \bbx_{t-r} \ldots \rightarrow \hat{\bbx}_{0}$ instead of the imprecise single-step de-noising in the last few iterations for both efficiency and performance concerns. Empirically, it equips the teacher-generated $\hat{\bbx}_{0}$  with more decent details and mitigates issues like texture flickering or overly saturated color blocks at smaller time-steps.
In addition, we propose the following approaches to further improve the fine-grained supervision quality in scenarios with sophisticated textures by clarifying the ambiguous inferred text description and alleviating the Janus face problem.

\noindent \textbf{LLM-augmented Prompts.}
Previous methods~\cite{metzer2023latent, lin2023magic3d, tang2023make} directly utilize ambiguous view-dependent prompts within $\{$\texttt{front}, \texttt{back}, \texttt{side}$\}$ as the condition for Stable Diffusion. Paralleled to ~\cite{wang2023score}, we notice that more specific language prompts force the image distribution of Stable Diffusion to be narrower and more beneficial to the mode-seeking SDS algorithm. To this end, we instead leverage instruction-tuned large language models (LLMs)~\cite{ray2023chatgpt} with carefully designed task prompts to specify unseen view descriptions from the original text description. The core objective of LLM-augmented prompts is to intricately enhance multi-view descriptions by integrating additional details, \eg, \textit{`\textbf{The back of} Ironman helmet, \textbf{with metallic sheen'}}, while meticulously avoiding description conflict among different views. Detailed instruct-LLM design is in \textit{Appendix}.


\noindent \textbf{Camera Pose De-biasing.}
When the given object is human / animal-like, the intrinsic pre-trained bias~\cite{liu2023zero,hong2023debiasing,armandpour2023re} of diffusion models always causes the Janus face problems. Empirical observations suggest that Zero-1-to-3 tends to generate a Janus face on the backside, tends to 'copy' the conditioned front face to the back view with symmetric contour, whereas Stable Diffusion is more prone to this issue on the sides. To mitigate these anomalies in the second stage, we dynamically employ gradient clipping and randomized dropout for $\nabla_{\theta} \mathcal{L}_{\text{SDS}}^{c}( \boldsymbol \theta, t)$ within the azimuth range $\left[\frac{11\pi}{12}, \frac{13\pi}{12}\right]$ and $\nabla_{\theta}  \mathcal{L}_{\text{SDS}}^{f}( \boldsymbol \theta, t)$ within the azimuth range $\pm\left[\frac{\pi}{6}, \frac{\pi}{4}\right]$, respectively.





\subsection{Advanced Regularization Techniques} 
\label{reg}

Due to the ambiguity and inconsistency guidance of diffusion-based 3D generation~\cite{tang2023dreamgaussian,graikos2022diffusion}, high-frequency artifacts often appear on the crisped surface of the student renderings. To counteract this, following~\cite{stable-dreamfusion,zhu2023rhino}, we basically improve the smoothness of the normal map by calculating the normal vector using finite depth differences and incorporate additional continuous connection from the input coordinate for hash-based NeRF regularization. As for DMTet and  mesh exportation, we apply Laplacian-based regularization~\cite{yin2015laplacian} to achieve mesh smoothness, capitalizing on the uniform Laplacian matrix derived from the mesh vertices and adjacent faces, as well as removal and calibration of unreferenced vertices and faces. We denote the above smoothness regularization as $\mathcal{L}_{\text {reg }}$ in both stages.

 




\subsection{Training Objective}
\label{objective}



Our final objective then comprises three key terms:
\begin{equation}
\nabla_{\theta} \mathcal{L}=\nabla \mathcal{L}_{\text{DTC}}+\lambda_{\text {reg }} \cdot \nabla \mathcal{L}_{\text {reg }} 
+\lambda_{\text {rec }} \cdot \nabla \mathcal{L}_{\text {rec }},
\end{equation}
where $\mathcal{L}_{\text{DTC}}$ represents the time-step curriculum diffusion prior objective for unseen views; $ \mathcal{L}_{\text {rec }}$ denotes the traditional reconstruction objective for reference views, which aligns the given image in depth space with Pearson correlation~\cite{sedgwick2012pearson} and the RGB, mask space through mean squared error (MSE)~\cite{marmolin1986subjective}; $\mathcal{L}_{\text {reg }}$ is the regularization term ensuring the geometrical smoothness.
By synergistically integrating these objectives, our DTC123 pipeline demonstrably achieves a high degree of geometric robustness and superior texture quality. 


\section{Experiments}\label{sec:exp}












\subsection{Implementation Details}

\label{detail}



\noindent \textbf{Pipeline Settings.} We consistently applied the \textit{same} set of hyper-parameters across all experiments. DTC123 was implemented in PyTorch with a single NVIDIA A100 GPU. We trained both the first and second stages for 3,000 iterations using the Adan~\cite{xie2022adan} optimizer with 1e-3 learning rate and 2e-5 weight decay, which cost approximately \textit{ 20-25} minutes for the entire pipeline. In the first stage, we adopted Zero123-XL~\cite{liu2023zero} (cfg=5) 
as the only teacher model to supervise an Instant-NGP~\cite{muller2022instant} with three MLP layers and hash-encoder. Then Stable Diffusion v2.0~\cite{rombach2022high}(cfg=25) was integrated with zero123-XL, jointly enhancing the geometric robustness and texture refinement of a DMTet, initialized from the prior stage. The reference view was sampled with a 25\% probability, and other views at 75\%, in both stages. Final rendering resolutions were set at 64 × 64 for the first stage and 512 × 512 for the second stage, respectively.




\noindent \textbf{Image Pre-processing Details.}
Given an arbitrary reference image, DTC123 pipeline systematically processed it for the follow-up 3D generation. The first step employed the state-of-the-art segmentation model, SAM~\cite{kirillov2023segany}, 
to meticulously distinguish foreground objects from their background context. Subsequently, the Dense Prediction Transformer~\cite{ranftl2021vision} was harnessed to estimate both depth and normal maps, ensuring that rich geometric information has been captured. 
Then, BLIP2~\cite{li2023blip} crafted a descriptive caption on the segmented object for text-conditioned guidance. The culmination of this workflow is a trio of outputs: a sharply segmented image, its corresponding depth and normal map, and its semantic description. 
Note that unlike~\cite{qian2023magic123}, ~\cite{melas2023realfusion},
textual inversion with image-level augmentations, which is time consuming (more than an hour), was not leveraged in ours.

\begin{figure}[h]
    \centering
    \includegraphics[width=\linewidth]{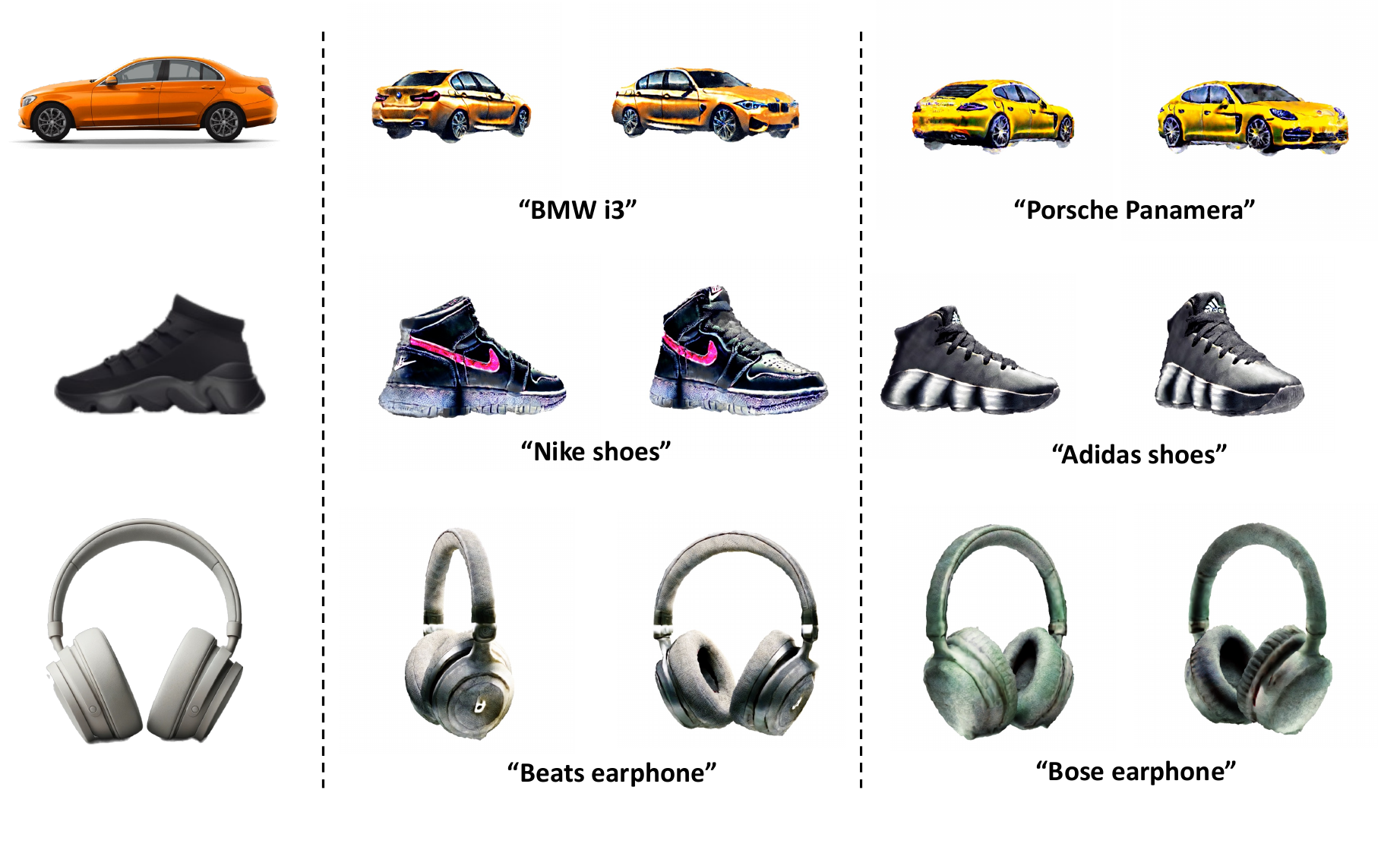}
    \caption{Multi-instance generation by customized prompts.}
    \vspace{-1mm}
    \label{multi}
\end{figure}

\begin{figure*}[t]
  \begin{minipage}[b]{1.0\linewidth}
  \centerline{\includegraphics[width=0.92\linewidth]{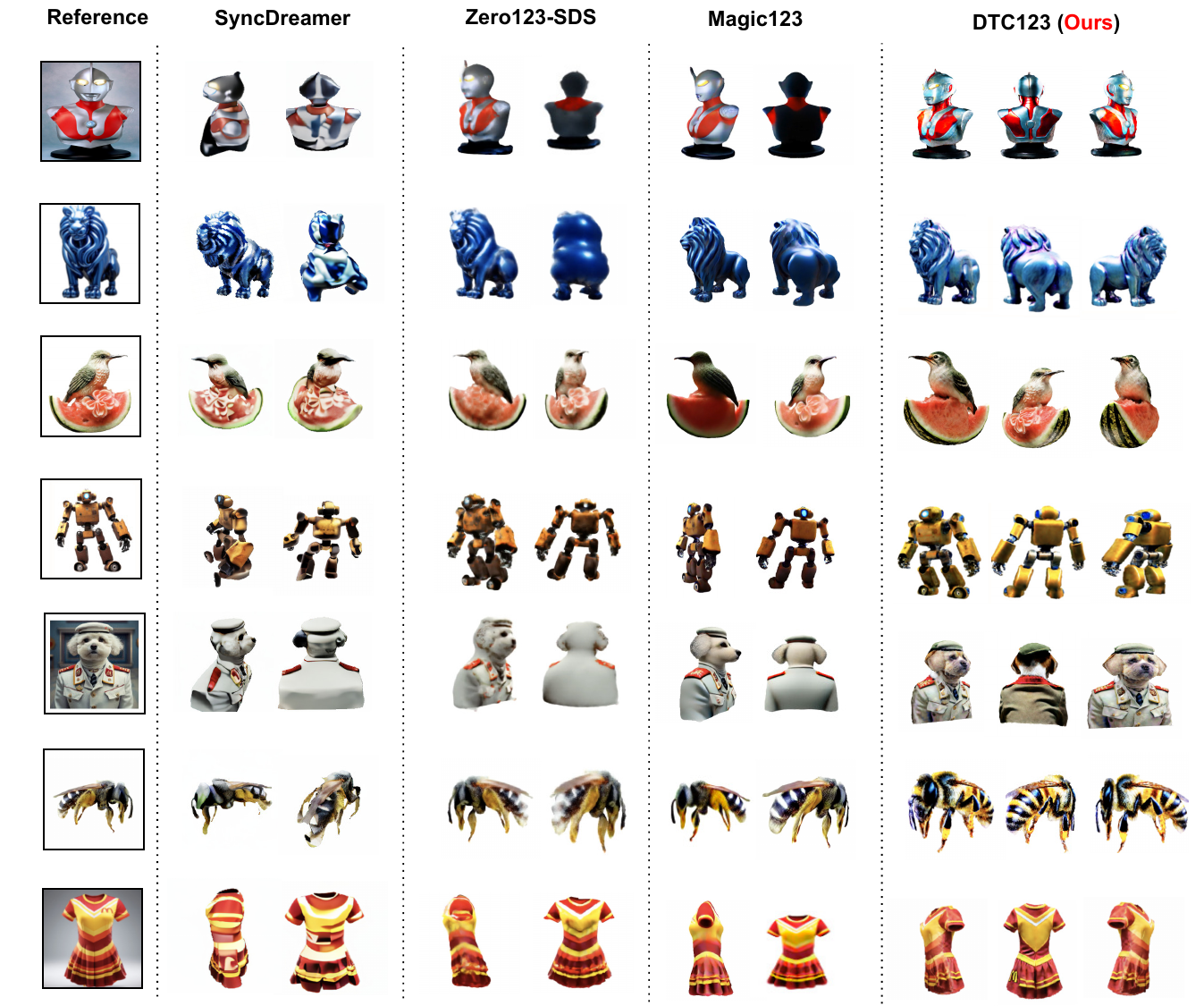}}
  \end{minipage}
  \caption{\textbf{Qualitative comparisons on image-to-3D generation}. We randomly sample several new views to present, while other views and methods are included in \textit{Appendix}. Our DTC123 consistently outperforms other state-of-the-art methods by generating multi-view consistent and high-fidelity results.} 
  \label{fig:5}
\end{figure*}

\subsection{Experimental Protocol}
An exceptional 3D model should not only mirror the reference image but also maintain a consistent correlation with the reference and plausible results when observed from other poses.
Following ~\cite{qian2023magic123}, we compared our DTC123 in RealFusion15~\cite{melas2023realfusion}, NeRF4 benchmark for quantitative comparison, with PSNR and LPIPS metrics to measure reconstruction quality and CLIP-similarity to evaluate appearance similarity. For qualitative results, we manually collected 50 reference images from the Internet out of the range of Objaverse, covering a wider range of difficult items. 

We adopted Zero123-SDS~\cite{liu2023zero}, RealFusion~\cite{melas2023realfusion}, NeuralLift~\cite{xu2023neurallift} and Magic123~\cite{qian2023magic123} as baseline SDS-based methods with their default experimental settings. We also compared with the state-of-the-art methods One-2-3-45~\cite{liu2023one} and SyncDreamer~\cite{liu2023syncdreamer}, which perform Image-3D generation in a feed-forward manner instead of SDS optimization. For Zero123-SDS and Magic123, we adopted the implementation from threestudio~\cite{threestudio2023} and leveraged Zero123-XL as ours for fair comparison, while others from their official codebase. We are quite confident that the baselines presented here are the finest re-implementations we have come across. 


\subsection{Image-3D Generation}

\noindent \textbf{Qualitative Results.}    
Figure~\ref{fig:5} demonstrates that DTC123 maintains high fidelity and plausible generation in complex scenarios. In contrast, generation results from most baselines, even those utilizing the more advanced Zero123-XL, are plagued by multi-view inconsistency, geometric distortion, and texture conflict. For example, in the yellow robot case in the \textit{fourth} line of Figure~\ref{fig:5}, competing textures cause blurring and exhibit unreasonable features, such as an extra eye on the back. In comparison, DTC123 generates high-fidelity novel views with realistic  metallic textures.

\noindent \textbf{{Quantitative Results.}}
Table~\ref{table:Image-to-3D} shows that DTC123 consistently outperforms other methods across all metrics, demonstrating its superior reconstruction (PSNR, LPIPS) and 3D consistency (CLIP-similarity) capabilities. Specifically, in the reference view reconstruction, DTC123 is on par with Magic123 and significantly exceeds RealFusion and NeuralLift. In terms of view consistency, as indicated by CLIP-similarity, DTC123 exceeds Magic123 by a large margin. The primary inconsistency in Magic123 stems from the boundary disparities between occluded and non-visible regions, resulting in pronounced seams. More results on GSO dataset is included in \textit{Appendix}.


\begin{table}[h]
\Huge
\centering
\caption{\textbf{Quantitative results.} We show quantitative results in terms of CLIP-Similarity$\uparrow$ / PSNR$\uparrow$ / LPIPS$\downarrow$. The results are shown on the NeRF4 and RealFusion datasets.}
\label{table:Image-to-3D}
\resizebox{\columnwidth}{!}{%
\begin{tabular}{@{}c|ccccccc@{}}
\toprule
\textbf{Dataset} & \textbf{Metrics\textbackslash{}Methods} 
& NeuralLift~\cite{xu2023neurallift}  & RealFusion~\cite{melas2023realfusion}
& Magic123~\cite{qian2023magic123} & \textbf{DTC123} \\
\midrule
\multirow{3}{*}{\textbf{NeRF4}}        
& CLIP-Similarity$\uparrow$  & 0.52 & 0.38 & 0.80 & 0.84 \\
& PSNR$\uparrow$             & 12.55 & 15.37 & 24.62 &  25.14 \\
& LPIPS$\downarrow$           & 0.50 & 0.20 & 0.03 & 0.02 \\ 
\midrule
\multirow{3}{*}{\textbf{RF15}} 
& CLIP-Similarity$\uparrow$ & 0.65 & 0.67 & 0.82 & 0.87 \\
& PSNR$\uparrow$            & 11.08 & 0.67 & 19.50 & 21.42 \\
& LPIPS$\downarrow$          & 0.53 & 0.14 & 0.10 & 0.08 \\ 
\bottomrule
\end{tabular}
}
\end{table}





\subsection{Multi-instance Generation}

While other Image-to-3D approaches~\cite{qian2023magic123,liu2023zero} can only generate plausible instances with limited diversity by different random seeds for initialization, DTC123 facilitates amazing multi-instance generation with different localized details by specific user prompt. For example, when the reference image presents the side view of a yellow car, users can specify their desired object, such as a 'BMW' or a 'Porsche Panamera'. DTC123 proficiently generates realistic and coherent 3D contents based on the given instructions, thanks to the appropriate time-step curriculum, where the student model captures the car's general shape initially, followed by brand-specific details. Please check Figure~\ref{multi} for additional results, which vividly demonstrates the potential of DTC123 for user-guided controllable generation and 3D editing.

\begin{figure*}[t]
    \centering
    \includegraphics[width=0.92\linewidth]{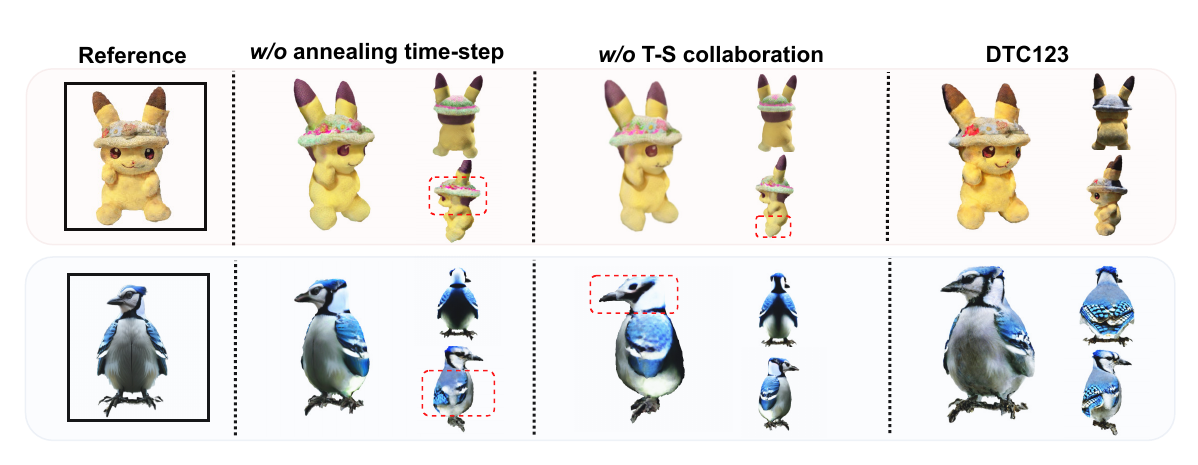}
        \vspace{-6mm}
    \caption{Ablation study on the component-wise contribution of DTC123. \textbf{T-S} denotes the \textbf{T}eacher-\textbf{S}tudent collaboration.}

    \label{ablation}
\end{figure*}



\section{Ablation and Discussion}




\noindent\textbf{Q1: }\emph{\textbf{What impacts performance of DTC123 in terms of component-wise contributions?}} We discarded each core component of DTC123 to validate its component-wise effectiveness. The results are depicted in figure~\ref{ablation}.

\noindent\textbf{A1: } In a multi-component pipeline, we observed that the exclusion of any component from DTC123 resulted in a significant degradation in performance. In particular, when the annealed sampling schedule is replaced with a random sampling schedule, it leads to geometric irregularities and excessive texture details. The similar situation occurs if teacher and student models don't collaborate with the annealed time-step. In contrast, when both teacher and student follow the time-step curriculum, it effectively generates results with sophisticated texture and illumination while ensuring geometric stability.

\noindent\textbf{Q2: }\emph{\textbf{How about the robustness of DTC123? }} To better diagnose the robustness of DTC123, we meticulously analyzed and quantified the occurrences of generation failure (\eg, Janus face, geometry distortion, extremely abnormal coloration) on Level50 with different random seeds and under different difficulty levels.

\noindent\textbf{A2: }
As depicted in Figure~\ref{figure5}, DTC123 consistently exhibits a lower failure rate compared to other methods across various initialization (random seeds) and at different difficulty levels by a large margin. Such robustness should be attributed to the proposed time-step curriculum, where the student 3D model initially captures coarse features, significantly reducing the instability typically associated with randomized NeRF initialization. 


\begin{figure}[h]
    \centering
    \includegraphics[width=0.95\linewidth]{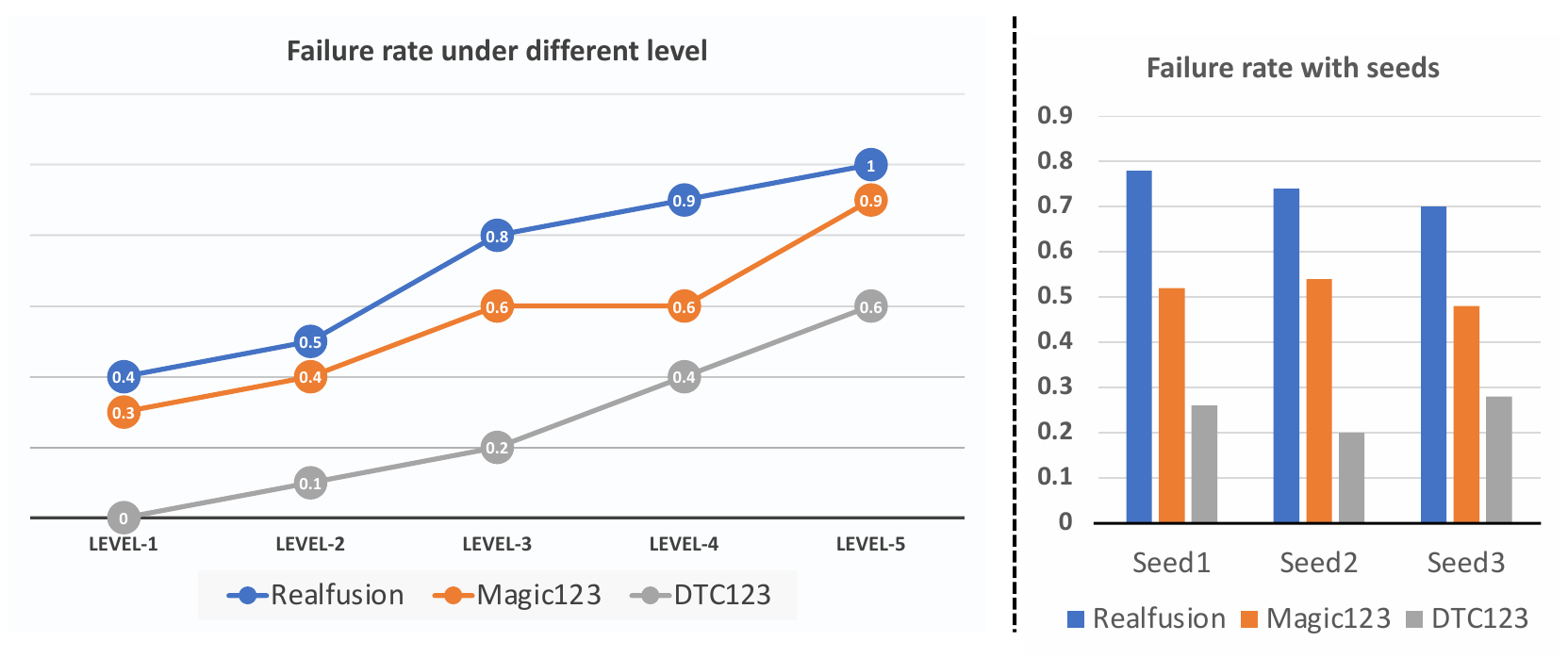}
    \caption{Failure analysis with different initialization and levels.}
    \label{figure5}
\end{figure}

\noindent\textbf{Q3: }\emph{\textbf{What's the difference between diffusion time-step curriculum and annealing time-step?}}

\noindent\textbf{A3:}
  We clarify that annealing time-step is \textbf{NOT} equivalent to, but a part of time-step curriculum. Our DTC includes three coherent parts : time-step schedule, progressive student representation and teacher guidance. Different from the viewpoint of DDPM sampling from Dreamtime~\cite{huang2023dreamtime}, we consider that SDS leverages teacher-generated $\hat{\bbx}_{0}$, which is a single-step de-nosing output with starting point $\bbx_{t}$, to optimize student-rendered $\bbx_{\pi}$. Modeling 3D generation as a data corruption reduction process, we reveal that there exists a time-step lower bound in SDS for \textit{teacher} diffusion model to well estimate the desired score function and thus provide quality guidance $\hat{\bbx}_{0}$. In order to alleviate the marked divergence of the teacher-generated $\hat\bbx_{0}$~\cite{ho2020denoising} at large time-steps, we further design the progressive student representation and teacher guidance to cooperate with the annealing time-step for a more stable coarse-to-fine generation. Please refers to \textit{Appendix} for more details.






\section{Conclusion}\label{sec:con}
We revisit Score Distillation Sampling (SDS) and point out that the crux of its enhancement lies in the proposed diffusion time-step curriculum. We then design an improved coarse-to-fine Image-3D pipeline (DTC123) which collaborates the teacher diffusion model and 3D student model with the time-step curriculum. Through qualitative comparisons and quantitative evaluations, we show that our DTC123 significantly improves the photo-realism and multi-view consistency of Image-to-3D generation. In the future, we will focus on exploring the potential of diffusion time-step curriculum with advanced teacher models and diverse student models to further improve the efficiency and quality.



\section*{Acknowledgments}
DTC123 is supported by the National Research Foundation, Singapore under its AI Singapore Programme (AISG Award No: AISG2-RP-2021-022) and the Agency for Science, Technology AND Research. Pan Zhou is supported by the Singapore Ministry of Education (MOE) Academic Research Fund (AcRF) Tier 1 grant.

\clearpage
{
    \small
    \bibliographystyle{ieeenat_fullname}
    \bibliography{main}
}


\maketitlesupplementary
\appendix


\noindent The \textit{Appendix} is organized as follows:
\begin{itemize}[leftmargin=*]
    \item \textbf{Section~\ref{a}:} gives a theoretical justification of the proposed \textit{Diffusion Time-step Curriculum}.
    \item \textbf{Section~\ref{b}:} further provides the experimental justification and discussion on the collaboration of the teacher and student with \textit{Diffusion Time-step Curriculum}.
    \item \textbf{Section~\ref{c}:} elaborates more details about our DTC123 pipeline. Specifically, we detailed the implementation of instruct-LLM design and geometry smoothness regularization.
    \item \textbf{Section~\ref{d}:} showcases more immerse experiment results and ablation studies with Level50 benchmark.
\end{itemize}

\section{Theoretical Justification}
\label{a}
This section elaborates on our  \textit{diffusion time-step curriculum} motivation, where larger time steps capture coarse-grained concepts and smaller time steps learn fine-grained details. We first show that a diffusion time-step curriculum is necessary which further induces an annealed sampling strategy. Upon that, we explain why teacher and student models should collaborate with each other to achieve such a time-step curriculum.

Specifically, SDS employs the de-noised $\hat\bbx_{0}$ 
generated by the teacher diffusion model  $\boldsymbol{\epsilon_{\phi}}\left(\bbx_{t} ; \boldsymbol{y}, t\right)$  to guide the  student-rendered  $\bbx_{\pi}$. 
Thus, the crux of this teacher-student optimization process is determined by the quality of the teacher guidance $\hat\bbx_{0}$, which motivates us to explore an appropriate strategy to ensure the valid guidance $\hat\bbx_{0}$ during any training iterations. We first formalize the definition of our target from the perspective of student data corruption.

\begin{definition}
\textup{(Data Corruption Reduction)} Given the camera pose $\pi$ and the condition $y$, we consider a student-rendered image $\bbx_{\pi} = g(\theta, \pi)$ at an arbitrary training iteration $k$. Suppose that there exists a real data point $\bbx^*$ drawn from the data distribution $p_{\text{data}}(\bbx)$ and an unknown data corruption $\delta_k = \KL[\delta(\bbx - \bbx_{\pi}) || p_{\text{data}}(\bbx)]$, such that we can express $\bbx_{\pi}$ as $\bbx_{\pi} = \bbx^* + \delta_k$. Our \textbf{objective} is to iteratively reduce the corruption $\delta_k$ inherent in $\bbx_{\pi}$ as $k \to \infty$, guided by the conditional score function $\nabla \log p_{t}(\bbx)$, such that $\bbx_{\pi}$ increasingly resembles a sample from $p_{\text{data}}(\bbx)$.
\end{definition}

\begin{figure}[t]
   \begin{minipage}[b]{1.0\linewidth}
   \centerline{\includegraphics[scale = 0.52]{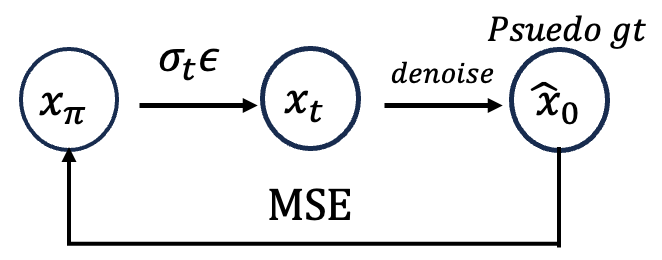}}
   \end{minipage}

\caption{ Simplified illustration of SDS.}
   \label{appappappfig:2}
\end{figure}


To estimate the real sample $\bbx^*$ in  $\bbx_{\pi}$ for good guidance, one can resort to the score function $\nabla \log p_{t}(\bbx)$, which, however, is practically inaccessible. To solve this issue, a pre-trained diffusion model $\boldsymbol\epsilon_{\phi}(\bbx, t)$ 
is often used to estimate the score function as shown in previous works~\cite{graikos2022diffusion,wang2023not,ho2020denoising}.  But to better denoise and thus produce quality $\hat{x_{0}}$, the teacher model $\boldsymbol\epsilon_{\phi}(\bbx, t)$ needs a certain time step $t$ to inject noise $\sigma_{t} \boldsymbol{\epsilon}$ into $\bbx_{\pi}$. In this way, the noisy sample $\bbx_{t} = \bbx_{\pi} 
+\sigma_{t} \boldsymbol{\epsilon}$ can approximately lie in the forward diffusion distribution $p_{t}(\bbx_t)=  \int_{\mathbb{R}^d} \mathcal{N}(\bbx, \sigma_t^2 \rmI) p_{\text{data}}(\bbx) \mathrm{d}\bbx$ in the diffusion model $\boldsymbol\epsilon_{\phi}(\bbx, t)$, \ie, the marginal distribution at time-step $t$ in the forward diffusion process which satisfies $p_{0}(\bbx_0) =   p_{\text{data}}(\bbx)$.  
For this point, we provide a formal analysis, and derive a \textit{proper} time-step sampling for better diffusion de-noising.  

\begin{figure*}[t]
    \centering
    \includegraphics[width=\linewidth]{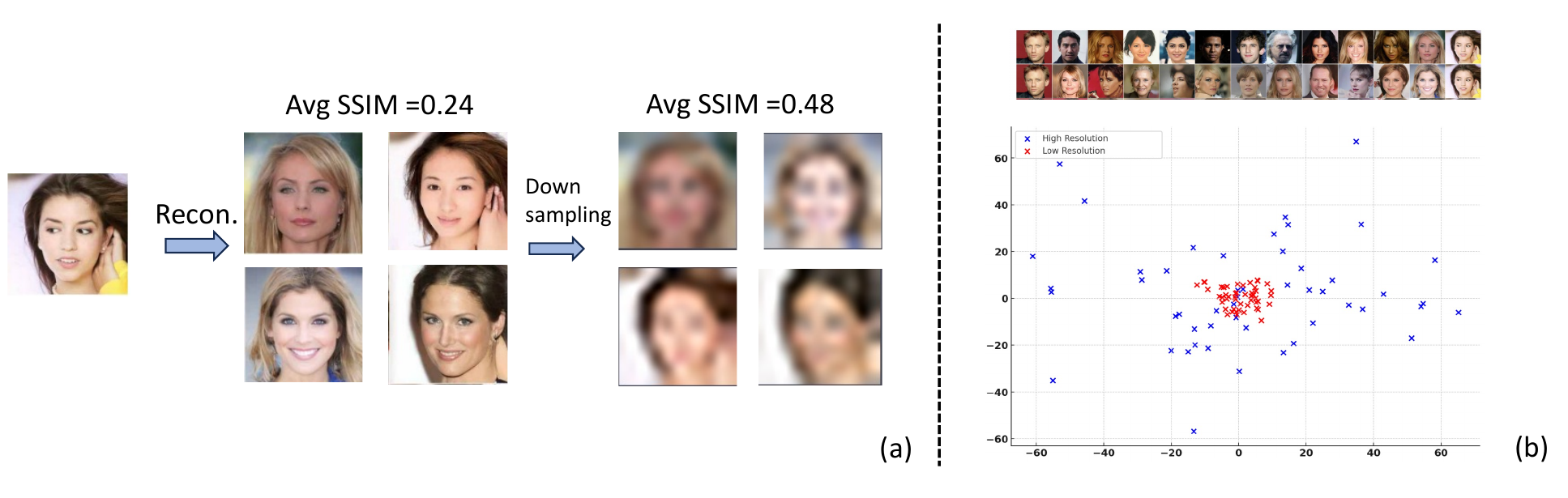}
    \caption{\textbf{(a)} The low-resolution generated set embraces similarity compared to its high-resolution counterparts. \textbf{(b)} The T-SNE visualization of high/low resolution generated sets on CelebA dataset.}
    \label{appfig:1}
\end{figure*}

 \begin{theorem}
\label{thm:lower_bound}
\textup{(Diffusion Time-step Lower bound)} Assume \(p_{t}(\bbx)\)  is the noisy data distribution and 
\(q_t(\bbx_t|\bbx_{\pi}) = \gN(\bbx_t; \alpha_t \bbx_{\pi}, \sigma_t^2 \rmI)\), for any $\bbx_t \sim q_t(\bbx_t|\bbx_{\pi})$, we have 
$\lVert \boldsymbol{\epsilon}_{\phi}(\bbx_t, t) - \nabla \log p_t(\bbx_t) \|_2^2 = \mathcal{O}(\varepsilon)$, if these two conditions hold :\\
a) the pretrained teacher diffusion model  $\boldsymbol{\epsilon}_{\phi}(\bbx, t)$ satisfies $\lVert \boldsymbol{\epsilon}_{\phi}(\bbx, t) -\nabla \log p_{t}(\bbx) \rVert_2^2 < \varepsilon$;\\
b) $t \sim \mathcal{U}[\Tilde{T}_{\delta_k, \varepsilon}, T]$ where $\Tilde{T}_{\delta_k, \varepsilon} = \mathcal{O}(\frac{\|\delta_k\|}{\epsilon})$.
\end{theorem}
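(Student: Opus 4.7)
My reading of the theorem is that both assumption (a) and the conclusion are meant in the standard score-matching sense, i.e., as expected squared errors under the respective distributions $p_t$ and $q_t(\cdot|\bbx_\pi)$; read as a uniform pointwise bound, (a) makes the conclusion immediate and condition (b) vacuous, so the non-triviality hinges on the expectation reading. Under this reading, the claim is that once the injected Gaussian noise dominates the corruption $\delta_k$, drawing $\bbx_t$ from $q_t(\cdot|\bbx_\pi)$ is statistically close to drawing from $p_t$, so the pretrained score is still accurate. My central tool will be a change of measure,
\begin{equation}
\mathbb{E}_{\bbx_t \sim q_t(\cdot|\bbx_\pi)}\left[\|\boldsymbol\epsilon_\phi(\bbx_t,t) - \nabla\log p_t(\bbx_t)\|_2^2\right] = \mathbb{E}_{\bbx_t \sim p_t}\left[\frac{q_t(\bbx_t|\bbx_\pi)}{p_t(\bbx_t)}\|\boldsymbol\epsilon_\phi(\bbx_t,t) - \nabla\log p_t(\bbx_t)\|_2^2\right],
\end{equation}
which reduces the task to controlling the Radon--Nikodym derivative $q_t/p_t$ in terms of $t$ and $\|\delta_k\|$.

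First I would lower-bound the mixture $p_t$ by the contribution of a $\sigma_t$-neighborhood of $\bbx^*$ in $\mathrm{supp}(p_{\text{data}})$, so that the mixture-to-component comparison reduces to comparing two Gaussians with identical covariance. A closed-form calculation then yields
\begin{equation}
\KL\left(q_t(\cdot|\bbx_\pi) \,\|\, \gN(\alpha_t \bbx^*, \sigma_t^2 \rmI)\right) = \frac{\alpha_t^2 \|\delta_k\|^2}{2\sigma_t^2}.
\end{equation}
Second, I would apply Pinsker's inequality to convert this into a total-variation bound $\mathrm{TV}(q_t(\cdot|\bbx_\pi), p_t) \lesssim \alpha_t \|\delta_k\|/\sigma_t$ (up to the constant coming from the neighborhood reduction), and combine it with (a) through the duality
\begin{equation}
|\mathbb{E}_{q_t}[X] - \mathbb{E}_{p_t}[X]| \leq 2\|X\|_\infty \cdot \mathrm{TV}(q_t(\cdot|\bbx_\pi), p_t),
\end{equation}
applied to $X = \|\boldsymbol\epsilon_\phi(\cdot,t) - \nabla\log p_t(\cdot)\|_2^2$. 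Finally, solving $\alpha_t \|\delta_k\|/\sigma_t = \mathcal{O}(\varepsilon)$ for $t$ under any standard noise schedule (e.g.\ VE with $\alpha_t = 1$ and $\sigma_t$ increasing linearly in $t$) yields the announced lower bound $\tilde{T}_{\delta_k,\varepsilon} = \mathcal{O}(\|\delta_k\|/\varepsilon)$.

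The main obstacle will be making the mixture-to-component reduction rigorous: for a manifold-supported $p_{\text{data}}$, one cannot invoke $p_t(\bbx) \geq p_{\text{data}}(\bbx^*) \gN(\bbx;\alpha_t \bbx^*,\sigma_t^2 \rmI)$ at a single point, so I would integrate $p_{\text{data}}$ over a ball of radius comparable to $\sigma_t/\alpha_t$ around $\bbx^*$, exploit the near-flatness of the Gaussian kernel at that scale, and absorb the resulting positive constant into the implicit constants. A secondary subtlety is that the score error $X$ need not be uniformly bounded, so the $L^\infty$--$\mathrm{TV}$ duality may have to be replaced by a Hellinger / $\chi^2$ argument; that substitution would shift the exponent in $\tilde{T}_{\delta_k,\varepsilon}$ between $\|\delta_k\|/\varepsilon$ and $\|\delta_k\|/\sqrt{\varepsilon}$ depending on which concentration inequality is used, which I would track carefully to match the statement.
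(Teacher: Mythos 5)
Your approach is genuinely different from the paper's, and both have something to be said for them. The paper reads condition (a) as a \emph{pointwise} bound, introduces an extra (unstated-in-the-theorem) Lipschitz assumption $\|\boldsymbol\epsilon_\phi(\bbx,t)-\boldsymbol\epsilon_\phi(\bbx',t)\|\le L\|\bbx-\bbx'\|$, and then runs a two-term triangle inequality: it compares $\boldsymbol\epsilon_\phi(\alpha_t\bbx_\pi+\sigma_t\boldsymbol\epsilon,t)$ to $\boldsymbol\epsilon_\phi(\alpha_t\bbx^*+\sigma_t\boldsymbol\epsilon,t)$ via Lipschitzness (giving $L\alpha_t\|\delta_k\|$), then to $\nabla\log p_t(\alpha_t\bbx^*+\sigma_t\boldsymbol\epsilon)$ via (a), and finishes by solving $L\alpha_t\|\delta_k\|=\mathcal O(\varepsilon)$ using $\alpha_t\propto 1/t$. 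Note this is not quite vacuous under the pointwise reading, because the quantity the paper actually controls is the error against $\nabla\log p_t$ evaluated at the \emph{clean} noised point $\alpha_t\bbx^*+\sigma_t\boldsymbol\epsilon$, not at $\bbx_t=\alpha_t\bbx_\pi+\sigma_t\boldsymbol\epsilon$; so the theorem as literally written does not match what the proof establishes, but the paper's proof is not a tautology. You instead read (a) in expectation, and attack the distributional mismatch head-on via change of measure, a Gaussian KL identity, Pinsker, and TV--$L^\infty$ duality. What your route buys: it makes the role of condition (b) transparent as a closeness-of-laws requirement, it does not smuggle in a Lipschitz constant, and it is closer in spirit to how score-matching guarantees are usually phrased. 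What it costs: the mixture-to-component lower bound on $p_t$ and the boundedness of the squared score error (needed for the duality step) are nontrivial and would push you toward a $\chi^2$/Hellinger variant, and, as you correctly flag, the exponent in $\tilde T_{\delta_k,\varepsilon}$ can drift between $\|\delta_k\|/\varepsilon$ and $\|\delta_k\|/\sqrt\varepsilon$ depending on which inequality you use and which noise schedule ($\alpha_t,\sigma_t$) is assumed. The paper's argument avoids all of that at the price of the Lipschitz assumption and the aforementioned evaluation-point mismatch. Neither argument proves exactly the stated inequality without an extra hypothesis; the paper chose Lipschitzness, you would choose integrability/boundedness of the score error and a regularity condition on $p_{\text{data}}$ near $\bbx^*$.
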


\begin{proof}
We assume a) always hold since $\boldsymbol{\epsilon}_{\phi}(\cdot)$ is a well trained diffusion model.
Assume that the diffusion model $\boldsymbol{\epsilon}_{\phi}(\cdot)$ satisfies the Lipschitz condition with a constant $L > 0$. Specifically, we have:
\begin{equation}
    \| \boldsymbol{\epsilon}_{\phi}(\bbx, t) - \boldsymbol{\epsilon}_{\phi}(\bbx^{\prime}, t) \| \leq L \| \bbx - \bbx^{\prime} \| .
\end{equation}
From condition a) we have for any $t$:
\begin{equation}
    \lVert \boldsymbol{\epsilon}_{\phi}(\bbx, t) - \nabla \log p_{t}(\bbx) \rVert_2^2 < \varepsilon.
\end{equation}
Recall that $\bbx_{\pi} = \bbx^* + \delta_k$ where $\bbx^* \sim p_{\text{data}}(\bbx)$, we have:
\begin{equation}
\begin{aligned}
& \| \boldsymbol{\epsilon}_{\phi}(\alpha_t \bbx_{\pi} + \sigma_t \epsilon, t) - \nabla \log p_{t}(\alpha_t\bbx^* + \sigma_t \epsilon) \| \\
\leq  &  
\| \boldsymbol{\epsilon}_{\phi}(\alpha_t\bbx_{\pi} + \sigma_t \epsilon, t) - \boldsymbol{\epsilon}_{\phi}(\alpha_t\bbx^* + \sigma_t \epsilon, t) \| \\
+ & \| \boldsymbol{\epsilon}_{\phi}(\alpha_t\bbx^* + \sigma_t \epsilon, t) - \nabla \log p_{t}(\alpha_t\bbx^* + \sigma_t \epsilon) \| \\
\leq & L \alpha_t \|\delta_k\| + \epsilon
\end{aligned}
\end{equation}
To achieve the desired accuracy, we have to let $t$ sufficient large such that $L \alpha_t \|\delta_k\| = \mathcal{O}(\epsilon)$, which means $\alpha_t = \mathcal{O}(\frac{\epsilon}{L \|\delta_k\|})$. Recall that in conventional diffusion models~\cite{ho2020denoising,karras2022elucidating,2021scorebased}, we have $\alpha_t \propto \frac{1}{t}$, thus we derive 
\begin{equation}
    t \geq \mathcal{O}(\frac{L \|\delta_k\|}{\epsilon}) = \mathcal{O}(\frac{\|\delta_k\|}{\epsilon}).
\end{equation}

\end{proof}


Theorem~\ref{thm:lower_bound} shows that the teacher diffusion model can accurately estimate the desired score function $\nabla \log q_t(\bbx_t|\bbx_{\pi})$ under the condition a) and b). For condition a), it often holds, since the pretrained teacher diffusion model $\boldsymbol{\epsilon}_{\phi}(\bbx, t)$ can (approximately) converge to the forward diffusion distribution $p_{t}(\bbx)$. Thus, if one can sample a proper time step $t$ such that $\bbx_t$ satisfies condition b), then the teacher diffusion model $\boldsymbol{\epsilon}_{\phi}(\bbx, t)$  can well denoise $\bbx_t$ and provides quality guidance $\hat{\bbx}_{0}$ to supervise the student 3D model. Since in the early training iterations, the student-rendered $\bbx_{\pi}$ contains high corruption $\delta_k$ and  $\Tilde{T}_{\delta_k, \varepsilon}$ positively depends on the corruption level as shown in Theorem~\ref{thm:lower_bound}, a large time step $t\geq \Tilde{T}_{\delta_k, \varepsilon}$  is needed to inject more noise $\sigma_{t} \epsilon$ into $\bbx_{\pi}$ so that condition b) holds and thus guarantees the quality of the denoising.

However, there is a dilemma that the marked divergence of the teacher-generated $\hat\bbx_{0}$~\cite{ho2020denoising} at large time-steps could negatively compromise the student coherent modeling that possesses consistent geometric and photometric properties, resulting in geometry distortion and mode collapse~\cite{tang2023dreamgaussian,wang2023prolificdreamer}.
From the perspective of information theory~\cite{ash2012information}, given a set of teacher-generated $\hat\bbx_{0}$ in certain training iterations, coarse-grained information (\eg, blur contour) tends to have less variance than its fine-grained counterpart (\eg, texture nuances), which is also empirically justified in Part~\ref{b}.
This motivates us to first focus on fundamental, low-variance student-teacher knowledge transfer with large time-steps. 
     As the coarse-grain converges along with the training iteration $k$, the corruptions $\delta_k$ in the student-rendered $\bbx_{\pi}$ diminish. From Theorem~\ref{thm:lower_bound}, smaller noise can counteract corruption $\delta_k$ and help $\bbx_{t}$ conform to distribution $p_{t}(\bbx_t)$. Thus, the teacher diffusion model often only refines $\bbx_{t}$ to improve the fine-grains without destroying the course-grains. Accordingly, we can gradually derive a more accurate estimation of the score function $\nabla \log p_{t}(\bbx)$, thus ensuring the refinement of intricate details (\eg, texture nuances) at smaller time-steps. 


\begin{figure*}[t]
    \centering
    \includegraphics[width=\linewidth]{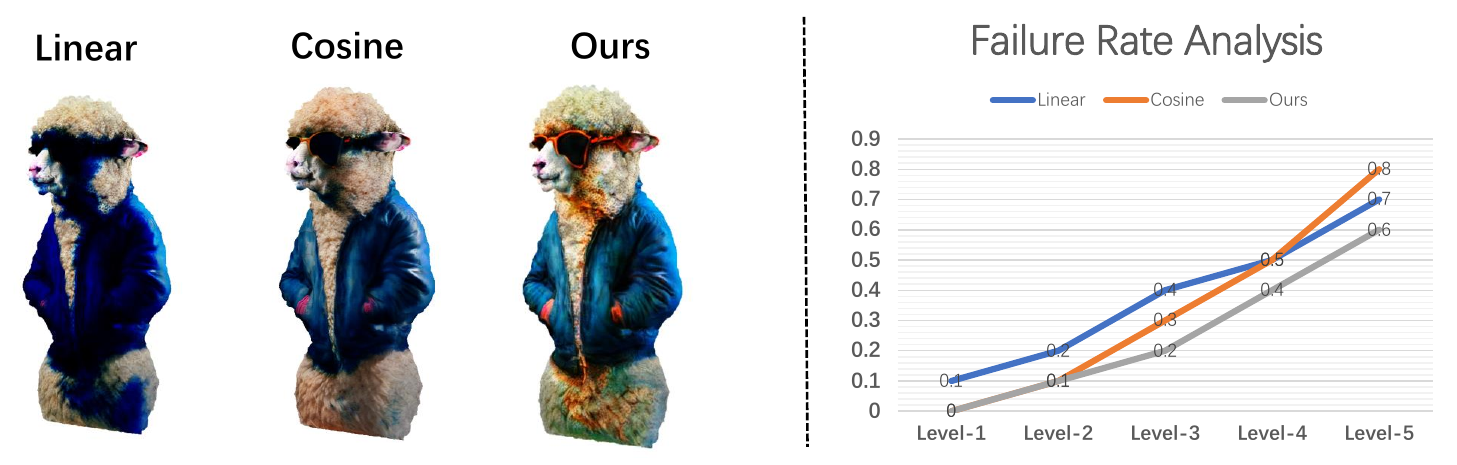}
    \caption{Ablations on different time-step sampling strategy.}
    \label{fig:3}
\end{figure*}

\section{Experimental Justification and Discussion}
\label{b}
This section elaborates on the experimental justification for the collaboration of the teacher and student model with the \textit{diffusion time-step curriculum}. We first intuitively indicate that the low-resolution representation has lower variance and is more robust compared to its high-resolution counterpart. Then we quantitatively analyze the view-conditioned and text-conditioned teacher diffusion model by comparing their reconstructed results with the multi-view ground-truth rendering images among different levels of perturbed noise.

\noindent \textbf{Student progressive representation.} We first generated a set of diverse teacher guidance $\hat{\bbx_{0}}$, following ~\cite{ho2020denoising} to use the CelebA~\cite{liu2015faceattributes} images by perturbing them with large noise (1000 diffusion time-steps) and reconstructing them during the reverse process. We then leveraged image down-sampling to simulate low-resolution modeling. As depicted in Figure~\ref{appfig:1}(a), the down-sampled generated set is more similar to each other than its high-resolution counterpart: the high-resolution images scored a structural similarity index (SSIM) of \textit{0.24}, contrasting sharply with the low-resolution set's \textit{0.48}. This significant difference highlights the reduced variance and increased similarity among down-sampled images, which is more suitable for student coherent modeling in the initial iteration. Figure~\ref{appfig:1}(b) further shows the T-SNE visualization of the high/low resolution generated set, which vividly demonstrates the low-variance and clustering effect of low-resolution representation. This toy experiment indicates that though the teacher diffusion model provides diverse guidance with a large time step~\cite{ho2020denoising}, we can ensure coherent modeling of 3D models by coarse-to-fine representation (resolution constraint) with the annealed time-step.

On the other hand, from the perspective of parameter sensitivity of multi-resolution modeling~\cite{muller2022instant}, rays intersecting the scene at coordinates \( x \) and the hash function defined as \( h(x) = \bigoplus_{i=1}^{d} x_i \pi_i \mod T \), even minor deviations in \( x \) lead to notable variations in encoded values. Denoting \( \delta x \) as the noise-induced deviation, the change in hash values, \( \Delta h \), escalates with higher resolutions, i.e.,
$\frac{\partial \Delta h}{\partial \delta x} \propto \text{resolution}$, indicating that
higher resolutions inherently magnify the sensitivity to noise and variance in the ground truth, which also inspires us to first capture the low-resolution representation for more stable 3D model optimization.

\noindent \textbf{Teacher coarse-to-fine prior.} 
Here, we conducted a quantitative experiment to answer the question about the suitable teacher for diffusion time-step curriculum. Given a 3D object from the \textit{high-quality real-scanned} dataset OminiObject3D~\cite{wu2023omniobject3d}, we used Cycles Engine in Blender to randomly (with fixed elevation and different azimuth ) render 16 multi-view images and transparent backgrounds with pure grey color. We then perturb them with different levels of noise and then compare~\footnote{Note that the condition of Zero-1-to-3 is the default front view image and the camera parameters, while the condition of Stable Diffusion is the caption of the 3D object.} the quality of the reconstructed image with the ground-truth renderings by investigating the contour exploration consistency via MaskIoU and the perceptual generation quality computed by CLIP-similarity.

As illustrated in Table 1, (1) At large time-steps, Zero-1-to-3 generated outputs tend to have better MaskIoU than Stable Diffusion outputs, which suggests that Zero-1-to-3 serves as a coarse-grained teacher by providing a more accurate contour or boundary at large $t$; (2) At smaller time-steps, both Zero-1-to-3 and Stable Diffusion have relatively high MaskIoU since the small scale perturbed noise doesn't corrupt the overall geometry structure. Considering the quality of perceptual generation (CLIP similarity), we notice that Stable Diffusion surpasses Zero-1-to-3 to some extent, indicating that Stable Diffusion is more suitable for a fine-grained teacher, since it produces more realistic texture details at smaller $t$.

\begin{table*}[h]
\centering
\renewcommand{\arraystretch}{1.2} 
\setlength{\tabcolsep}{5pt}       
\begin{tabular}{|>{\bfseries}c|*{2}{c|}|*{2}{c|}|*{2}{c|}|*{2}{c|}}
\hline
\rowcolor{mylightgray}
\multicolumn{9}{|c|}{\textbf{Quantitative analysis among different time-steps}} \\
\hline
\rowcolor{mylightgray}
\textbf{Time-step} & \multicolumn{2}{c|}{\textbf{200}} & \multicolumn{2}{c|}{\textbf{400}} & \multicolumn{2}{c|}{\textbf{600}} & \multicolumn{2}{c|}{\textbf{800}} \\
\hline
\rowcolor{mylightgray}
\textbf{Metrics} & \textbf{MaskIoU} & \textbf{CLIP-S} & \textbf{MaskIoU} & \textbf{CLIP-S} & \textbf{MaskIoU} & \textbf{CLIP-S} & \textbf{MaskIoU} & \textbf{CLIP-S} \\
\hline
\textbf{Zero-1-to-3} & 0.92 & 0.84 & 0.87 & 0.82 & 0.84 & 0.79 & 0.82 & 0.80 \\
\hline
\textbf{Stable Diffusion} & 0.89 & 0.90 & 0.81 & 0.84 & 0.72 & 0.82 & 0.63 & 0.74 \\
\hline
\end{tabular}
\caption{Quantitative compassion of Zero-1-to-3 and Stable Diffusion among different time-steps, where CLIP-S denotes the CLIP similarity between de-noising output and the ground truth renderings. }
\end{table*}

\section{Background}
\label{c}

Due to the limited space of the camera-ready version, we mainly introduce \textit{student} 3D models, \textit{teacher} diffusion models here that will help to build up our approach.

\subsection{Student 3D Model}

We aim to learn an underlying 3D representation $\theta$ (\eg, NeRF, mesh), which uses a differentiable renderer $g(\cdot)$ to generate the relative image from any desired camera pose $\pi$ by $\bbx_{\pi} = g(\theta, \pi)$. 
For computation memory concerns, we leverage NeRF~\cite{mildenhall2021nerf} for low-resolution scene modeling, and then adopt DMTet~\cite{shen2021deep} for high-resolution mesh fine-tuning.

\noindent\textbf{(NeRF)~\cite{mildenhall2021nerf}} is a differentiable volumetric representation. It characterizes the scene as a volumetric field by density and color with a neural network $\theta$. 
Given a camera pose $\pi$, the rendered image can be computed by alpha compositing the color density field.
Considering rendering efficiency, multi-resolution hash grids~\cite{muller2022instant} are usually utilized to parameterize the scene. This representation helps to achieve high-quality rendering results with a faster training speed.

\noindent\textbf{Hybrid SDF-Mesh Field (DMTET)} is a differentiable surface representation. It parameterizes the Signed Distance Function (SDF) by a deformable tetrahedral grid $(V_T, T)$, where $T$ represents the tetrahedral grid and $V_T$ corresponds to its vertices. By assigning every vertex $v_i \in V_T$ with a SDF value $s_i \in R$ and a deformation vector $\Delta v_i \in R^3$, this representation allows recovering a explicit mesh through differentiable marching tetrahedra.

\subsection{Teacher Diffusion Model} 

Conditional Diffusion Model (CDM)~\cite{ho2022classifier,kawar2022enhancing,wu2023fast} basically generates the desired samples given a certain condition $y$.
In the context of SDS-based 3D generation, we mainly utilize the following two types of teacher diffusion prior with different conditions:

\noindent \textbf{Text-conditioned Prior.} Large-scale pre-trained text-to-image diffusion models, \eg, Stable Diffusion~\cite{rombach2022high}, are often leveraged with the text description condition of the reference image. In practice, one often employs textual inversion~\cite{gal2022image,yang2023controllable} or large vision-language models \eg, BLIP2~\cite{li2023blip} to generate the text description of the reference image. 

\noindent \textbf{View-conditioned Prior.} Zero-1-to-3 ~\cite{liu2023zero} 
 is fine-tuned from the Stable Diffusion image variations~\cite{LambdaLabs2023} on the 3D synthetic dataset Objaverse~\cite{deitke2023objaverse}, and integrates viewpoint control to conduct novel view synthesis given the camera pose and reference image as condition.

\section{Implementation Details}
\label{d}

This section elaborates on the details of implementation details of instruct-LLM design and geometry smoothness regularization.

\noindent \textbf{Instruct-LLM Design.} In the second stage, we noted that employing finer, more precise linguistic prompts to significantly narrow the image distribution~\cite{bai2023integrating} in Stable Diffusion, complementing the mode-seeking SDS algorithm. Utilizing large language models (LLMs), we converted BLIP2-derived prompts into comprehensive, view-specific descriptions. This approach intensifies detail in each orthogonal view, avoiding superfluous structural descriptions and resolving perspective conflicts.
The detailed instruction is as follows:

\begin{custombox}
 ``I am about to begin a series of image-3D generation tasks and need your help to create prompt descriptions. I'll provide the frontal description first. You will then give a one-sentence description for each the left, right, and rear sides, ensuring: (1) Alignment with the frontal description, (2) Conciseness with rich textural detail, (3) 3D consistency across descriptions, using DALLE-3 for validation.''
\end{custombox}

Figure~\ref{appappfig:2} illustrates the prompts created using this method, demonstrating our technique's effectiveness in generating consistent, detailed multi-view descriptions.
\begin{figure}[h]
    \centering
    \includegraphics[width=\linewidth]{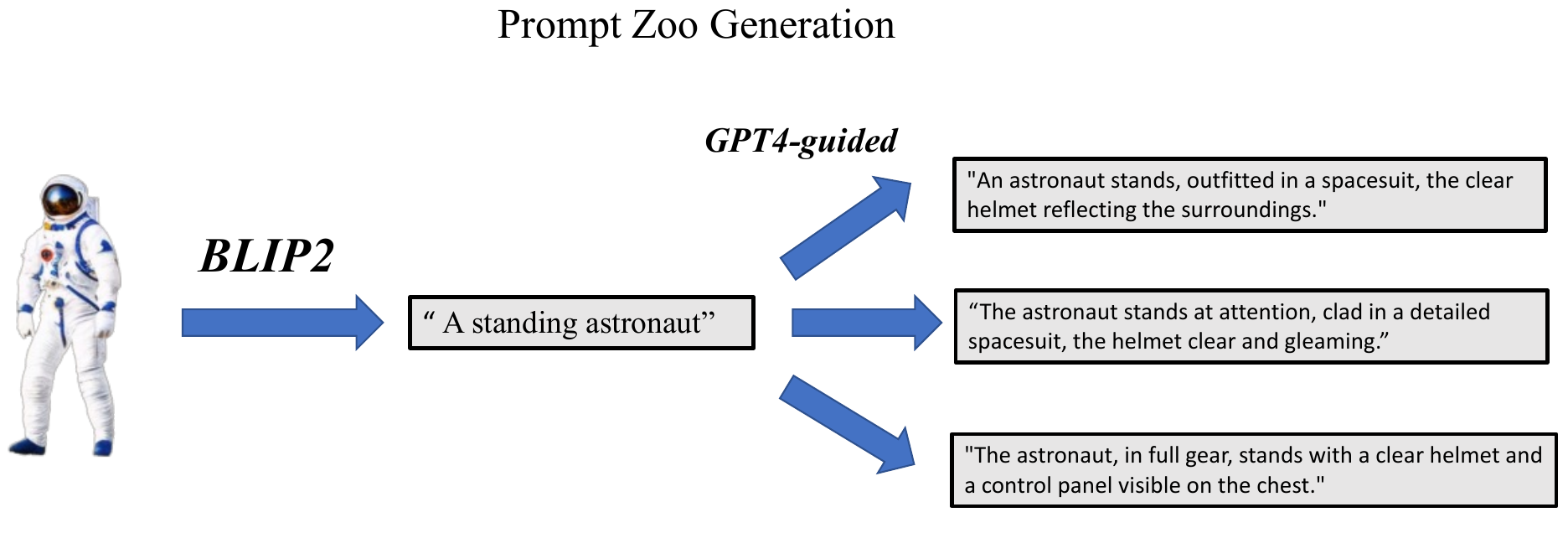}
    \caption{Examples of viewpoint-augmentation with LLM. Given an abstract prompt, our instruct-LLM outputs complement the initial prompts with insufficient information.}
    \label{appappfig:2}
\end{figure}

\noindent \textbf{Geometry smoothness regularization.} We observe that the 3D model occasionally generated high-frequency artifacts on the crisped surface and edge contour. Following \cite{stable-dreamfusion}, we leverage the normal vector regularization $\mathcal{L}_{\text{reg}}$:

\begin{equation}
\mathcal{L}_{\text{reg}} = \mathbb{E}_{\mathbf{a}}\left[\| \mathbf{n}(\mathbf{a}) - \mathbf{n}\left(\mathbf{a} + \beta \cdot \mathcal{N}(0, I)\right) \|_1 \right]
\end{equation}

where $\mathbf{n}$ denotes the normal vector at a point $\mathbf{a}$ in the 3D space, $\beta $ is a small perturbation scale and $\mathcal{N}(0, I)$ is standard Gaussian noise. 

In the second stage of DMTet, we implement Laplacian smoothing regularization where the Laplacian matrix $\mathbf{L} \in \mathbb{R}^{V \times V}$ is computed by identifying adjacent vertex pairs from each face in $\mathbf{F}$. Entries in $\mathbf{L}$ are set to $-1$ for adjacent vertices and to the degree of the vertex on the diagonal. The smoothing loss $\mathcal{L}_{\text{reg}}$ is defined as the mean norm of the product of the Laplacian matrix $\mathbf{L}$ and vertex positions $\mathbf{V}$:

\begin{equation}
\mathcal{L}_{\text{reg}}  = \text{mean}\left(\|\mathbf{L} \cdot \mathbf{V}\|_2\right),
\end{equation}

which ensures the uniformity and smoothness of the mesh by minimizing deviations in the vertex positions, leading to a more regular and smooth 3D structure.

\section{More Experimental Results}
\label{e}

This section presents more qualitative and quantitative results of DTC123 in both text-to-3D and Image-to-3D generation tasks and ablation studies on the time-step sampling strategy.

\subsection{Text-to-3D Generation Results}
For the text-to-3D tasks in Figure~\ref{appfig:3}, we adapt our pipeline by replacing Zero-1-to-3 with MVdream for coarse guidance, leveraging different time-step schedule and prompt de-biasing. As illustrated in Figure~\ref{appfig:3}, our DTC123 can generate high-fidelity 3D assets corresponding to the given text prompt.

\begin{figure}[h]
   \begin{minipage}[b]{1.0\linewidth}
   \centerline{\includegraphics[scale = 0.28]{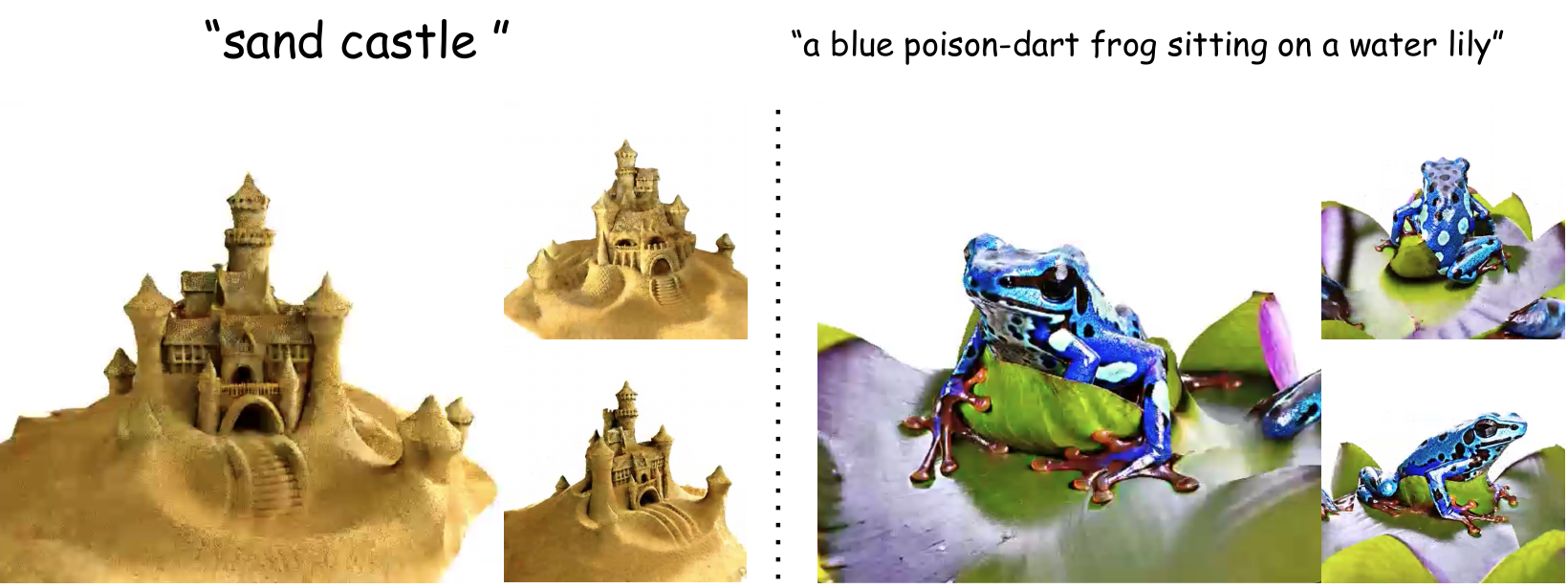}}
   \end{minipage}
\captionsetup{font={scriptsize}} 
\vspace{-6mm}
\caption{\scriptsize Text-to-3D Results with diffusion time-step curriculum.}
   \label{appfig:3}
\end{figure}

\subsection{Ablation on Time-step Sampling Strategy}
To better analyze the DTC123 premium sampling strategy, we still adopted the failure rate metric in the manuscript on Level50.
As illustrated in Figure~\ref{fig:3}, our proposed sampling strategy consistently exhibits a lower failure rate compared to other methods at different difficulty levels. We justify that the robustness should be attribute to the introduced local randomness with the annealed interval, since
(1) even within the same training iteration, the corruption level of 3D models varies across camera poses, and (2) in contrast to~\cite{wang2023not}, it is nearly impossible to pinpoint the exact corruption level without the ground-truth of unseen view, we need some randomization for self-calibration of the teacher-student symbiotic cycle, like SDE sampling in conventional DMs~\cite{karras2022elucidating}, which can partially correct the cumulative error introduced by the optimization process and alleviate `floaters' effects.

\begin{table}[h]
\centering
\caption{Quantitive Results on GSO dataset.}
\setlength{\tabcolsep}{4pt} 
\renewcommand{\arraystretch}{1} 
\begin{tabular}{lccc}
\hline
\textbf{Metric} & \textbf{Zero123} & \textbf{Magic123} & \textbf{DTC123} \\ \hline
Chamfer Distance ↓ & 0.110 & 0.129 & 0.028 \\
volumetric IoU ↑   & 0.382 & 0.327 & 0.610 \\ \hline
\end{tabular}
\label{tab}
\end{table}

\subsection{More Quantitative Results }
Table~\ref{tab}, we randomly chose 10 objects from GSO dataset for multi-view elevation, which vividly showcased that our method outperforms Magic123 and Zero-1-to-3 by a large margin.

\subsection{More Qualitative Results }
In Figure~\ref{fig:big} and Figure~\ref{fig:big2}, we present additional qualitative outputs with high fidenity and multi-view consistency. Please check out the video demos for more results.

\begin{figure*}[t]
    \centering
    \includegraphics[width=0.95\linewidth]{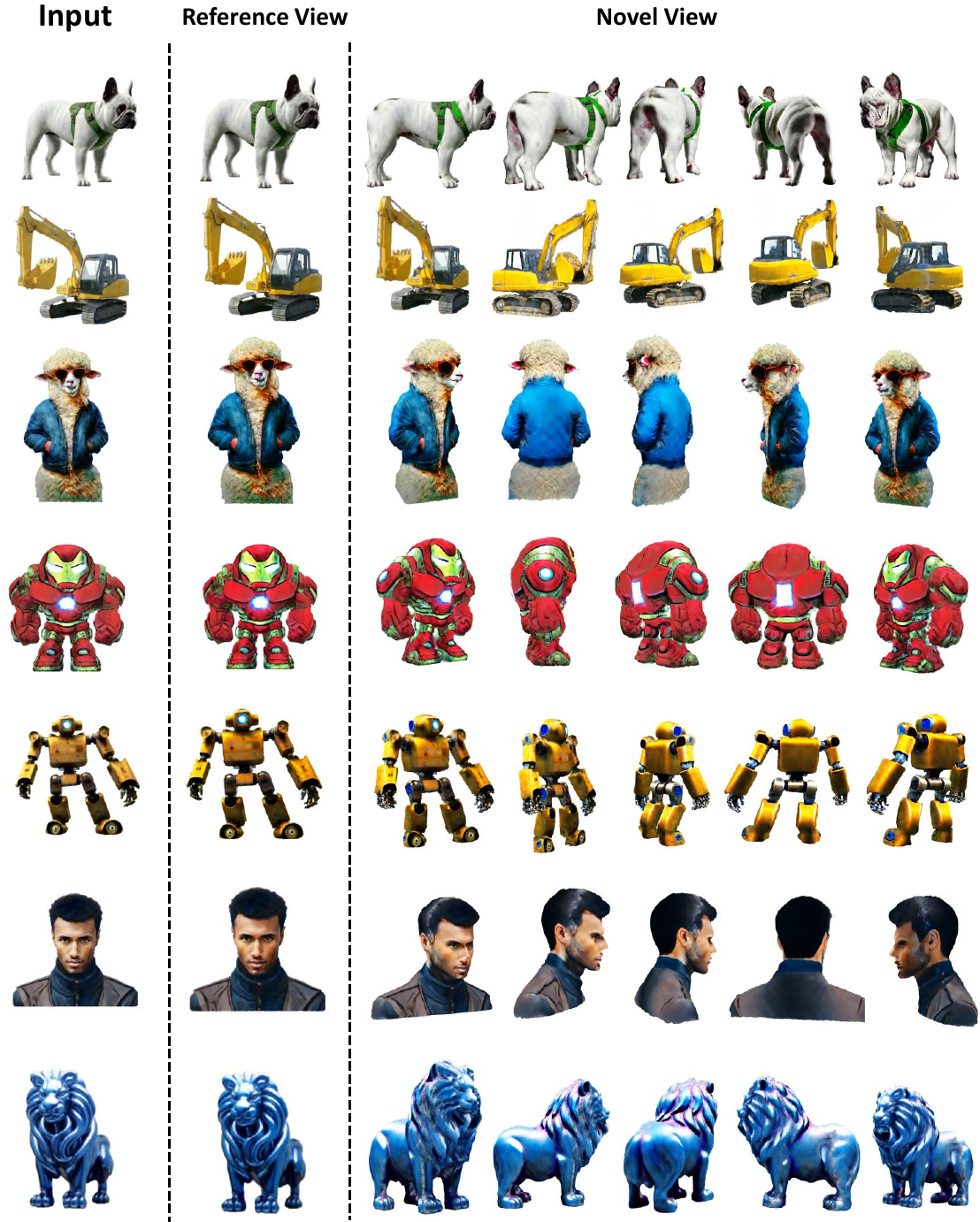}
    \caption{More DTC123-generated results. Our approach yields results with improved fidelity and more robust geometry.}
    \label{fig:big}
\end{figure*}

\begin{figure*}[t]
    \centering
    \includegraphics[width=0.95\linewidth]{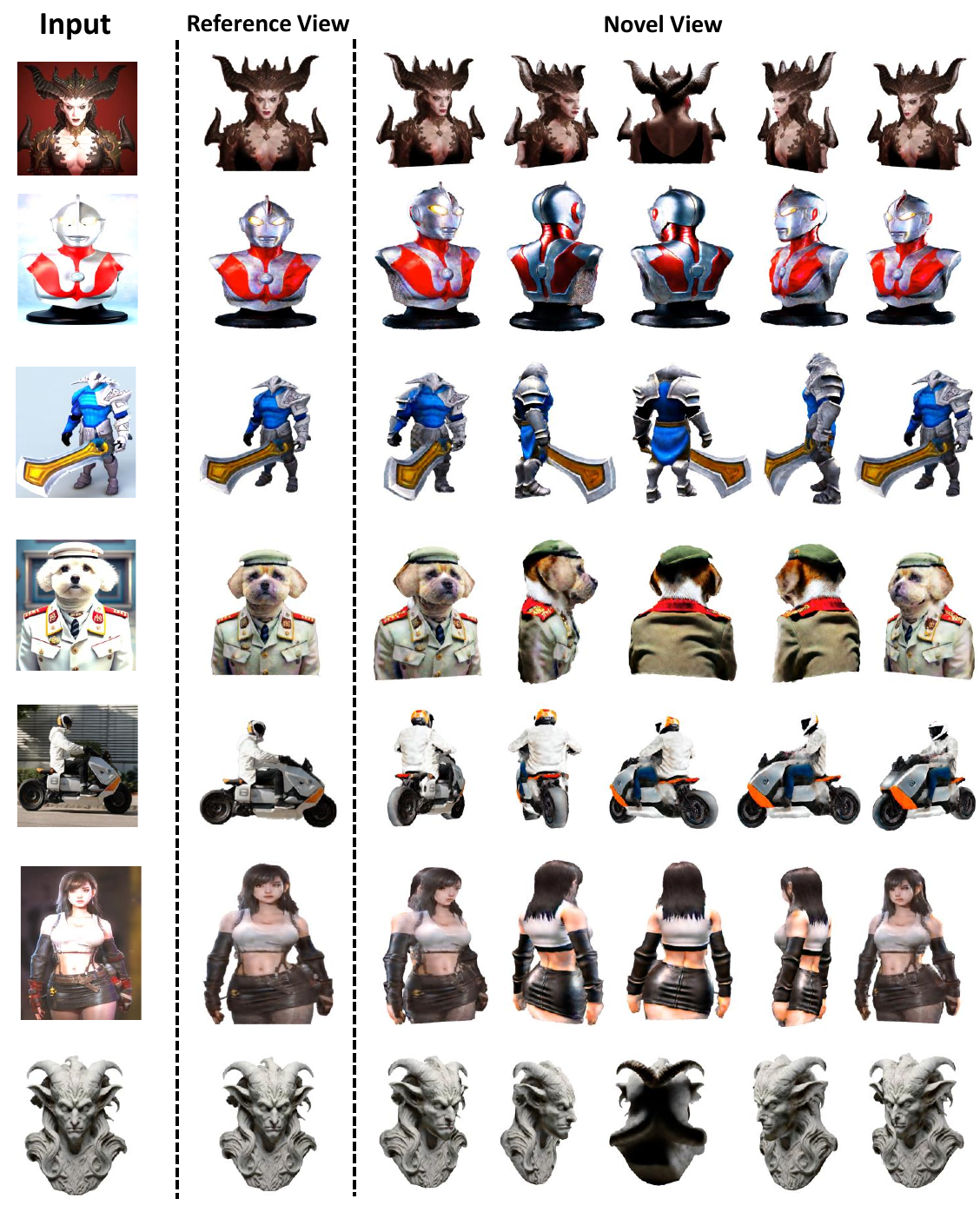}
    \caption{More DTC123-generated results. Our approach yields results with enhanced fidelity and more robust geometry.}
    \label{fig:big2}
\end{figure*}

\end{document}